\renewcommand{\algorithmicrequire}{\textbf{Input:}}
\renewcommand{\algorithmicrequire}{\textbf{Output:}}
\newcommand{\argmin}{\mathop{\rm arg~min}\limits}
\newtheorem{thm}{Theorem}
\newtheorem{lemma}{Lemma}
\newtheorem{definition}{Definition}
\def\atag{\refstepcounter{equation}\tag{\number\c@equation}}
\numberwithin{equation}{section}
\title{LFD-ProtoNet: Prototypical Network Based on Local Fisher Discriminant Analysis for Few-shot Learning}
\author[1]{Kei Mukaiyama}
\author[1,2]{Issei Sato}
\author[1,2]{Masashi Sugiyama}
\affil[1]{The University of Tokyo}
\affil[2]{Riken AIP}
\affil[ ]{{\{mukaiyama@g.ecc.,issei.sato@is.,sugi@k.\}u-tokyo.ac.jp}}
\begin{document}

\maketitle

\begin{abstract}
The prototypical network (ProtoNet) is a few-shot learning framework
that performs metric learning and classification using the distance to
prototype representations of each class.
It has attracted a great deal of attention recently since it is simple
to implement, highly extensible, and performs well in experiments.
However, it only takes into account the mean of the support vectors as
prototypes
and thus it performs poorly when the support set has high variance.
In this paper, we propose to combine ProtoNet with local Fisher
discriminant analysis
to reduce the local within-class covariance and
increase the local between-class covariance of the support set.
We show the usefulness of the proposed method
by theoretically providing an expected risk bound
and empirically demonstrating its superior classification accuracy on
miniImageNet and tieredImageNet.
\end{abstract}

\section{Introduction}
Few-shot learning \cite{Erik,Brenden} is a classification framework from a very small amount of training data.
This framework is used in situations where there is a need to reduce the cost of adding annotations to a large amount of data or there is few data we can use. 
One promising direction to few-shot learning is based on meta learning \cite{Finn}, in which
the training data is separated into a support set for learning representations and a query set for prediction and computing the loss.
This separation unifies the process of learning from the support set and predicting the labels of the query set into a single task.
That is, the problem of few-shot learning is formulated as learning a representation of the support and query sets, called support and query vectors.
The model-agnostic meta learning (MAML) \cite{J.Yoon} learns how to learn by optimizing initial parameters. The matching network (MatchNet) \cite{Vinyals} learns how to add attention or weight from the support set and predicts query labels following the attention mechanism. 
The prototypical network (ProtoNet) \cite{Snell} consists of meta learning and metric learning.
It is simple to implement, highly extensible, and performs as well as complex models in few-shot learning.
The mean vectors of the support vectors are treated as representations for each class, and labels of query vectors are predicted by the distance to the class representations. 
The task adaptive projection network (TapNet) \cite{Sung} is based on ProtoNet and learns class-reference vectors  representing each class.
It also uses singular value decomposition (SVD) to find a subspace onto which the mean vectors and class-reference vectors are projected nearby. 

However, these existing methods still have several drawbacks, leading to undesired classification performance.
ProtoNet only takes into account the mean vectors; thus, it causes misclassification when the variance in the support set is relatively large.
TapNet reduces misalignment of support vectors in the algorithm.
In few-shot learning, however, since the amount of data we can use is small, searching for the best features is difficult.
Thus, TapNet can not find better directions but only removes worse direction, which may result in weak feature extraction. 

In this paper, we propose the use of local Fisher discriminant analysis (LFDA) \cite{Sugiyama} to obtain a feature projection matrix in the feature extraction step in ProtoNet(Fig.\,\ref{figure: net}).
In LFDA, for samples in each class, first the local within-class covariance matrix and the local between-class covariance matrix are computed. Then, it finds \emph{any} number of directions or features that minimize the local within-class covariance and maximize the local between-class covariance. Through this LFDA feature extraction step, we can choose a better subspace from the support set and compute the mean vector for each class after projecting 
them by using the subspace. In the prediction of the query set, we also project query vectors embedded by the network using the subspace obtained in LFDA. By using the mean vectors of the support set and query vectors, we predict the query labels. 
Compared to ProtoNet and TapNet, the remarkable difference is explicitly searching for a better supspace in the algorithm, which leads to significantly better performance in classification.

\textbf{Contributions:}
We make three contributions in this work. 
\begin{enumerate}
\setlength{\parskip}{-0.1cm} 
\item We propose a novel few-shot learning algorithm based on ProtoNet and LFDA, which we refer to as LFD-ProtoNet.
\item We provide an upper bound of the classification risk for LFDA-ProtoNets, which theoretically guarantees the performance of the proposed method. We analyze the effect of the shot number and feature projection matrix.
\item We experimentally show that loss decreases much faster and the accuracy is better than that of TapNet for small iteration complexity.
\end{enumerate}

The code is available online\footnote{Code for LFD-ProtoNet is at \url{https://github.com/m8k8/LFD_ProtoNet.git}. Note: there are some mistakes in our code and the results arer wrong and we'll do experiments again and show correct results.}.

\section{Problem formulation and notations}
\begin{figure}
    \centering
    \includegraphics[width=15cm]{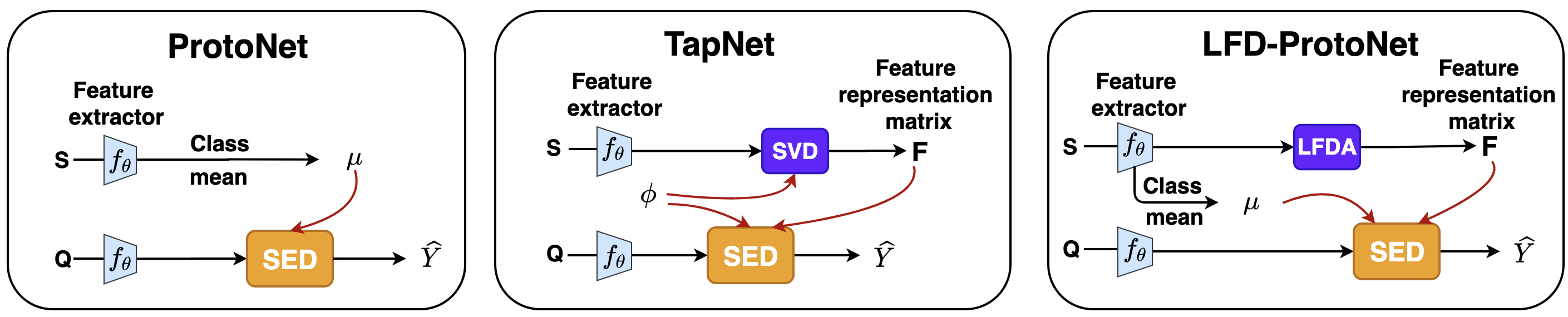}
    \caption{\textbf{Prototypical few-shot learning algorithms.} The representation vectors and feature projection matrix are obtained from the support set $\textbf{\textrm{S}}$, and the query labels of the query set $\textbf{\textrm{Q}}$ are predicted by the squared euclidean distance (SED). The difference of networks is how to obtain the representation vectors and the feature projection matrix.}
    \label{figure: net}
\end{figure}
In few-shot learning, we need to prepare training data with a different configuration from ordinary machine learning \cite{Erik,Brenden}.
A task set is a training data consisting of meta-training data and meta-test data, referred to as a support set and a query set, respectively.

Let $\mathcal{X}$ be the input space and $\mathcal{Y}$ be the output space.
Denote by $x_{c,i}$ the $i$-th input of class $c$.
These samples are drawn independently from an unknown distribution $\mathcal{D}$, i.e., $(x_{c,i},c)\sim \mathcal{D}$.
We define a task as a pair of the support set and query set, and we denote by $\mathcal{T}$ the task distribution.
The task $T=\{D_\mathrm{s},D_\mathrm{q}\}\sim \mathcal{T}$ is composed of support set $D_{s}=\{(x_{c,i}\in \mathcal{X},c\in \mathcal{Y})_{c,i}\}$ and query set $D_{q}=\{(x'_{c,i}\in \mathcal{X},c\in \mathcal{Y})_{c,i}\}$. 
The number of support sets is limited such that when we classify $C$ classes in $k$-shot learning, the number of the samples per class is $k$. 

Denote by $m$ the dimension of the embedded latent feature space and by $n$ that of the space projected by feature projection matrix.
We define the network $f_{\theta}:\mathcal{X}\rightarrow \mathbb{R}^m$ that embeds an input sample $x_{c,i}\in \mathcal{X}$ to a latent feature space space $\mathbb{R}^m$ with the parameter $\theta$.
For learning representations, we define the feature extractor $\mathrm{FE}$ that is given some information such as embedded support vectors or support labels and returns a feature projection matrix $F\in \mathbb{R}^{m\times n}$.
It is expected that $F$ extracts better statistical information from $m$-dimensional latent feature space to $n$-dimensional latent feature space.
Let $\overline{f_{\theta}(x_{c,\cdot})}\in \mathbb{R}^n$ be a representation vector for each class $c$ to predict the label of the query vector with the Euclidean distance.

Finally, we define the loss function $g:\mathbb{R}^n\times \mathbb{R}^n\times \mathbb{R}^{(k-1)\times n}\rightarrow \mathbb{R}$ and a generalization loss of the network $f_{\theta}$ as follows. 
Let $v_c$ be  the query vector that is embedded by $f_{\theta}$ and projected by $F$.
\begin{align}
g(v_c,\overline{f_{\theta}(x_{c,\cdot})},\{\overline{f_{\theta}(x_{s,\cdot})}\}_{s\neq c}):=d(v_c,\overline{f_{\theta}(x_{c,\cdot})})+\log \sum_{s\neq c} \exp (-d(v_c,\overline{f_{\theta}(x_{s,\cdot})})),
\end{align}
where  the distance function $d(\cdot)$ is given by the L2 norm, i.e, $d(v_c,\overline{f_{\theta}(x_{s,\cdot})})=\|v_c-\overline{f_{\theta}(x_{s,\cdot})}\|_2^2$ and $\{\overline{f_{\theta}(x_{s,\cdot})}\}_{s\neq c}$ indicates all class representation vectors excluding class $c$. 

The generalization loss $L_{\mathcal{T}}(f)$ is defined by
\begin{align}
L_{\mathcal{T}}(f):=\mathbb{E}_{T\sim \mathcal{T}}[g(Ff(x'_c),\overline{f_{\theta}(x_{c,\cdot}))},\{\overline{f_{\theta}(x_{s,\cdot})}\}_{s\neq c}].
\end{align}
The empirical loss $\widehat{L}_{\mathcal{T}}(f)$ is also defined by
\begin{align}
\widehat{L}_{\mathcal{T}}(f):=\frac{1}{N}{\displaystyle \sum_{T_u\sim \mathcal{T},1\leq u \leq N}}\frac{1}{CM}\sum_{c=1}^C\sum_{i=1}^Mg(Ff(x'_{c,i}),v_b,\{\overline{f_{\theta}(x_{s,\cdot})}\}_{s\neq b}),
\end{align}
where $N$ is the number of tasks, $M$ is the number of query data per class, and $C$ is the number of classes.

We construct a hypothesis set $\mathcal{H}=\{f_{\theta} | \theta \in \Omega\}$ and aim to solve the following optimization problem:
\begin{align}
f^*=\argmin_{f\in \mathcal{H}} L_{\mathcal{T}}(f).
\end{align}
However, since $L_{\mathcal{T}}$ is not accessible, we minimize the empirical risk in practice:
\begin{align}
\widehat{f^*}=\argmin_{f\in \mathcal{H}} \widehat{L}_{\mathcal{T}}(f).
\end{align}
We explain existing work on the basis of this problem formulation as follows.

\textbf{The prototypical network (ProtoNet)} \cite{Snell} is a pioneering algorithm of combining meta learning and few-shot learning.
A number of variants based on this algorithm have been proposed because of its simplicity and better performance \cite{Sung,Oreshkin}.
The simplicity of ProtoNet lies in making the feature projection matrix $F_{\textrm{Proto}}$ an identity matrix, i.e., $F_{\textrm{Proto}}=I_m(m=n)$, in the feature extraction step. 
Moreover, what ProtoNet requires for obtaining the $c$-th class representation $\overline{f_{\theta}(x_{c,\cdot})}$ is only to average support vectors, which are embedded by $f_{\theta}$, belonging to class $c$. 
That is, $\overline{f_{\theta}(x_{c,\cdot})}$ is simply given by
\begin{align}
\overline{f_{\theta}(x_{c,\cdot})}=\frac{1}{k} \sum_{i=1}^k f_{\theta}(x_{c,i}).
\end{align}

\textbf{The task adaptive projection network (TapNet)} \cite{Sung} adds a feature projection function to ProtoNet. 
As with ProtoNet, the training data is split into the support set and the query set.
Then, a class reference vector $\phi_c$ is introduced to represent each class $c$.
In the feature extraction step of TapNet, a matrix projecting the class reference vectors $\phi_c$ and the mean vectors per class $\frac{1}{k}\sum_{i=1}^{k} f(x_{c,i})$ to the same location for each class is found where $k$ is the number of the data of class $c$ and $\overline{f_{\theta}(x_{c,\cdot})}$ indicates that $x_c$ belongs to class $c$. 
When the norm of all vectors is ignored and only their directions are considered, the matrix $F_{\textrm{SVD}}$ is obtained by the following equation with singular value decomposition:
\begin{align}
\forall c \in \{ 1,2,3\ldots,C \}.~F_{\textrm{SVD}}\frac{\overline{f_{\theta}(x_{c,\cdot})}}{\| \overline{f_{\theta}(x_{c,\cdot})} \|_2} =F_{\textrm{SVD}}\frac{\phi_c}{\| \phi_c \|_2}.
\end{align}
The query data is similarly embedded by $f_{\theta}$ and projected by $F_{\textrm{SVD}}$.
Finally, their labels are estimated by the distances between $F_{\textrm{SVD}} f(x_{c,i})$ and $F_{\textrm{SVD}}\phi_k$.

Following Chen et al.\cite{Chen}, the schematics of ProtoNet and TapNet are illustrated in Fig.\,\ref{figure: net}.

\section{Preliminaries}
\label{prelim}
\textbf{Fisher discriminant analysis (FDA)} \cite{Fisher} is used for finding the subspace to make the within-class covariance $S_{\mathrm{wit}}$ small and between-class covariance $S_{\mathrm{bet}}$ large:
\begin{align}
S_{\mathrm{wit}} = \frac{1}{kC} \sum_{c=1}^C\sum_{i=1}^k(x_{c,i} - \mu_c)(x_{c,i}-\mu_c)^{\top},
\quad
S_{\mathrm{bet}} = \frac{1}{C} \sum_{c=1}^C(\mu_c - \mu)(\mu_c -\mu)^{\top},
\end{align}
where $k$ is the number of samples in each class, $\mu_c$ is the mean vector of the class $c$, and $\mu$ is the mean vector of all samples in $\mathbb{R}^n$. We take $\bm{w}$ as the directions to project samples and minimize the ratio of the within-class covariance of vectors projected by $\bm{w}$ and their between-class covariance:
\begin{align}
\min_{\bm{w}} \frac{\bm{w}^{\top}S_{\mathrm{wit}}\bm{w}}{\bm{w}^{\top}S_{\mathrm{bet}}\bm{w}}.
\end{align}
The $\bm{w}$ is the best direction that minimizes the ratio.
We can also find some directions that also make the ratio small if the problem is a multi-class case. To obtain these directions, we solve a modified problem
\begin{align}
W_{\mathrm{FDA}}=\argmin_{W} \mathrm{Tr}((W^{\top}S_{\mathrm{bet}}W)^{-1}W^{\top}S_{\mathrm{wit}}W),
\end{align}
where $W$ is the $n\times (C-1)$ matrix and $\mathrm{Tr}(\cdot)$ is the trace of the matrix. 
Since the solution is the eigenvectors of $S_{\mathrm{bet}}^{-1}S_{\mathrm{wit}}$, the rank of this matrix is just $C$, which is the number of the class.
That is, the total number of eigenvectors is also just $C$, which is a problem with few-shot learning described in Sec.\ref{sec:prop}.

\textbf{Local Fisher discriminant analysis (LFDA)} \cite{Sugiyama} is an extension of FDA.
When samples in a class are multimodal, keeping local within-class scatter can be hard in FDA because multimodal samples should be merged into a single cluster.
This constraint results in less separate embedding due to less degree of freedom. 
To solve this problem, LFDA combined FDA and \textit{locality-preserving projection} \cite{Niyogi} and construct the within-class covariance $S_{\mathrm{wit}}^{A}$ and between-clss covariance $S_{\mathrm{bet}}^{A}$ by using the affinity matrix $A$ whose elements are the similarities of samples in the same class, e.g.,
the squared exponential kernel $\exp(-(||x_i-x_j||)^2)$ is used for $A_{i,j}$.
The details of $S_{\mathrm{wit}}^{A}$ and $S_{\mathrm{bet}}^{A}$ are in Appendix \ref{Sec:LFDA} in the supplementary material.
The objective function to minimize is the same, i.e.,
\begin{align}
W_{\mathrm{LFDA}}=\argmin_{W}\mathrm{Tr}((W^{\top}S_{\mathrm{bet}}^{A}W)^{-1}W^{\top}S_{\mathrm{wit}}^{A}W).
\end{align}
In FDA, the rank of $(S_{\mathrm{bet}})^{-1}S_{\mathrm{wit}}$ is just the number of classes $C$; however, in LFDA the rank of $(S_{\mathrm{bet}}^{A})^{-1}S_{\mathrm{wit}}^{A}$ is the number of samples $kC$ because by adding the similarity terms, vectors with a linear dependency in FDA has a linear dependency.

\section{Proposed method}
\label{sec:prop}
Since the feature projection matrix in ProtoNet is just an identity matrix $I_n$, ProtoNet uses no information about the support set.
That is, incorporating the support set can improve the classification performance of few-shot learning based on ProtoNet. 
The feature projection matrix $F_{\textrm{SVD}}$ of TapNet aims to reduce the misalignment of support vectors by eliminating worse directions up to $C$. 
This means that if we have much more support data, we can reduce misalignment more; however, the more support data we obtain, the less features we can use. 
This is counterintuitive because the ideal situation is that if we obtain more data for the support set, then we can obtain more features for each class. 
In this section, we propose a novel feature projection matrix in accordance with the intuition that more data lead to more useful features.

\subsection{Algorithm}
\label{sec:alg}
We suppose that if we can make the local within-class covariance of the support set smaller and at the same time can make its local between-class covariance larger, then the classification performance is expected to be improved. 
This concept was originally introduced in FDA \cite{Fisher}; hence, using FDA for feature extraction is one option. 
The dimension of the features extracted by FDA, however, is limited to the rank of the covariance matrix, i.e., $C-1$ as described in Sec.\,\ref{prelim}.
That is, the expression power of the FDA features is typically insufficient.
 To solve this problem, we propose to use LFDA, in which we can increase the dimension of the extracted features to $kC-1$  as described in Sec.\,\ref{prelim}.
 LFDA can usually extract any number of the features up to the number of the support set so we can use more features from the support set.
 
By using the feature projection matrix $ F_{\textrm{LFDA}}^{c}$ in LFDA (see Eq.\,\eqref{eq:lfd} below), we formulate the representation vector $\overline{f_{\theta}(x_{c,\cdot})}$ as the mean vectors of $F_{\textrm{LFDA}}^{c}f_{\theta}(x_{c,i})$ in terms of $i$, i.e.,
\begin{align}
\overline{f_{\theta}(x_{c,\cdot})}=\frac{1}{k}\sum_{i=1}^{k}F_{\textrm{LFDA}}^{c}f_{\theta}(x_{c,i}).
\end{align}
The query vectors are embedded by $f_{\theta}$, projected by $F_{\textrm{LFDA}}^{c}$, and predicted as the class $c$ that is the class of the nearest representation vector to the query vector.
We summarize the proposed algorithm in Algorithm\,\ref{fig: bo}.
\begin{algorithm}[t!]
    \renewcommand{\algorithmicrequire}{\textbf{Input:}}
    \renewcommand{\algorithmicensure}{\textbf{Output:}}
    \caption{Few-shot learning (k-shot) algorithm framework based on ProtoNet}
    \label{fig: bo}
    \textbf{Notations:} Denote by $T$ a task, drawn from $\mathcal{T}$, compased of support set $D_s$ and query set $D_q$. Denote by $L_{tr}$ a training loss, by $F_{\mathrm{LFDA}}^{c}$ a feature projection matrix of local Fisher discriminant analysis, and by $f_{\theta}$ an embedding function with parameters $\theta$. In this algorithm $C$ means the number of classes in the task and $M$ means the number of query samples per class.
    \begin{algorithmic}[1]
    \REQUIRE training task $\{T_u\}_{u=1}^N \sim \mathcal{T}$ where $T_u=\{D_s,D_q\}$, $D_s=\{(x_{c,i},c)\}_{1\leq c \leq C,1\leq i \leq k}$ and $D_q=\{(x'_{c,i},c)\}_{1\leq c \leq C,1\leq i \leq M}$.
    \STATE $L_{tr}\leftarrow 0$
    \FOR{$u$ in $u=0,1,\ldots,N$}
    \STATE $(D_s,D_q)\leftarrow T_u$
    \STATE $F_{\mathrm{LFDA}}^{c}=\argmin_{W} \mathrm{Tr}((W^{\top}\Sigma_{F,c}W)^{-1}W^{\top}\Sigma_FW)$, $c=1,\ldots,C$ using Eqs.\,\eqref{eq:sigmaf} and \eqref{eq:sigmafc}.
    \STATE $\{\overline{f_{\theta}(x_{c,\cdot})}\}_{c=1}^C=\{\frac{1}{k}\sum_{i=1}^k F_{\textrm{LFDA}}^{c}f_{\theta}(x_{c,i})\}_{c=1}^C$
    \STATE $L_{T_u}\leftarrow 0$
    \FOR{$c$ in $c=0,1,\ldots,C$}
    \FOR{$i$ in $i=0,1,\ldots,M$}
    \STATE $L_{T_u} \leftarrow L_{T_u} + g(F_{\textrm{LFDA}}^{c}f_{\theta}(x'_{c,i}),\overline{f_{\theta}(x_{c,\cdot})},\{\overline{f_{\theta}(x_{s,\cdot})}\}_{s\neq c})$
    \ENDFOR
    \ENDFOR
    \STATE $L_{tr}\leftarrow L_{tr} + \frac{1}{CM}L_{T_u}$
    \ENDFOR
    \STATE $L_{tr}\leftarrow \frac{1}{N}L_{tr}$
    \STATE update $\theta$ with $L_{tr}$
    \end{algorithmic}
\end{algorithm}

\subsection{Theoretical analysis}
\label{sec:theory}
The effect of $k$ in ProtoNet was analyzed by Cao et al. \citep{Tianshi}. 
We analyze our algorithm in line with their work and  show that how our algorithm is theoretically better than ProtoNet.

As in Cao et al. \citep{Tianshi}, we first consider, for simplicity, the case where the query is the binary classification of class $\mathrm{a}\in \mathcal{Y}$ or $\mathrm{b}\in \mathcal{Y}$. The result can be easily generalized to the multi-class classification (see Appendix \ref{ap:multi}).
These $\mathrm{a}$ and $\mathrm{b}$ are random variables from all class sets $\mathcal{Y}$. The support set of $\mathrm{a}$ is defined as $\mathcal{S}_{\mathrm{a}}=\{x_{\mathrm{a},i}\}_{i=1}^k$, and that of $b$ is defined as $\mathcal{S}_{\mathrm{b}}=\{x_{\mathrm{b},i}\}_{i=1}^k$. 
The whole support set is $S=\{\mathcal{S}_{\mathrm{a}},\mathcal{S}_{\mathrm{b}}\}$. 
The support sets $\mathcal{S}_{\mathrm{a}}$ and $\mathcal{S}_{\mathrm{b}}$ are embedded by the network $\phi$ and a feature projection matrix $F$ is obtained by LFDA to make the local between-class covariance large and the local within-class covariance small. With LFDA, we get the representation vectors of classes $\mathrm{a}$ and $\mathrm{b}$. When we take $x\in \mathcal{X}$ from the query set, it is also embedded by $\phi$, projected by $F$, and finally the distances to $\mathrm{a}$ and $\mathrm{b}$ are compared. In the analysis below, we assume that the query $x$ belongs to class $\mathrm{a}$. 
\begin{definition}[Representation vector of class]
We define $\overline{\phi(\mathcal{S}_c)}$ as the mean vector of support vectors in class $c\in \{\mathrm{a},\mathrm{b}\}$ so the representation vector of class $c$ is written as $\overline{\phi(\mathcal{S}_c)}$. In the $k$-shot learning $|\mathcal{S}_c|=k$, we can write $\overline{\phi(\mathcal{S}_c)}$ as
\begin{align}
\overline{\phi(\mathcal{S}_c)}=\frac{1}{k}\sum_{i=1}^k\phi(x_{a,i}).
\end{align}
We also define $\overline{F\phi(\mathcal{S}_c)}$ as the mean vector of support vectors projected by feature projection matrix $F$ in class $c$.
The representation vector of class $c$ after the feature extraction step is written as
\begin{align}
\overline{F\phi(\mathcal{S}_c)}=\frac{1}{k}\sum_{i=1}^kF\phi(x_{a,i}).
\end{align}
\end{definition}

\begin{definition}[Between-class covariance and within-class covariance]
We define between-class covariance matrix $\Sigma$ and within-class covariance matrix of class $c$ $\Sigma_c$ as
\begin{align}
\Sigma&=\frac{1}{2}((\overline{\phi(\mathcal{S}_a)}-\overline{\phi(\mathcal{S})})(\overline{\phi(\mathcal{S}_a)}-\overline{\phi(\mathcal{S})})^{\top}+(\overline{\phi(\mathcal{S}_b)}-\overline{\phi(\mathcal{S})})(\overline{\phi(\mathcal{S}_b)}-\overline{\phi(\mathcal{S})})^{\top}),\\
\Sigma_c &=\frac{1}{k}\sum_{i=1}^k(\phi(x_{c,i})-\overline{\phi(\mathcal{S}_c)})(\phi(x_{c,i})-\overline{\phi(\mathcal{S}_c)})^{\top}.
\end{align}
We also define $\Sigma_{F}$ and $\Sigma_{F,c}$ as both the between-class covariance matrix projected by $F$ and within-class covariance matrix projected by $F$ in class $c$ as follows.
\begin{align}
\Sigma_{F}=&\frac{1}{2}\Bigl\{\left(\overline{F\phi(\mathcal{S}_a)}-\overline{F\phi(\mathcal{S})}\right)\left(\overline{F\phi(\mathcal{S}_a)}-\overline{F\phi(\mathcal{S})}\right)^{\top}\notag\\
&+\left(\overline{F\phi(\mathcal{S}_b)}-\overline{F\phi(\mathcal{S})}\right)\left(\overline{F\phi(\mathcal{S}_b)}-\overline{F\phi(\mathcal{S})}\right)^{\top}\Bigr\}\label{eq:sigmaf},\\
\Sigma_{F,c} =&\frac{1}{k}\sum_{i=1}^k\left(F\phi(x_{c,i})-\overline{F\phi(\mathcal{S}_c)}\right)\left(Ff(x_{c,i})-\overline{F\phi(\mathcal{S}_c)}\right)^{\top}\label{eq:sigmafc}.
\end{align}
\end{definition}

\begin{definition}[Task loss]
The task loss $\ell_{\mathrm{task}}(T)$ with $0$-$1$ loss $\ell_{err}$ is defined as
\begin{align}
\ell_{\mathrm{task}}(T) = \frac{1}{MC}\sum_{i=1}^{MC} \ell_{\mathrm{err}}(\widehat{y_i},y_i),
\end{align}
where $T=\{D_s,D_q\}\sim \mathcal{T}$, $D_\mathrm{s}$ and $D_\mathrm{q}$ are the support set and query set, and $\widehat{y_i}$ and $y_i$ are the $i$-th estimated label and the $i$-th true label in the query set $D_q$.
\end{definition}

\begin{definition}[Empirical risk of $\phi$]
We define the empirical risk of $\phi$ using task loss $\ell_{\mathrm{task}}$ where $m$ tasks $T_u$ are drawn independently from the task distribution $\mathcal{T}$, i.e.,
\begin{align}
\widehat{R}_{n,\mathcal{T},c}(\phi) = \frac{1}{N}\sum_{u=1}^N\ell_{\mathrm{task}}(T_u),\quad T_u\sim \mathcal{T}~(u=1,\ldots,N).
\end{align}
\end{definition}

\begin{definition}[Expected risk of $\phi$]
\label{def:ex_risk}
We define the risk of $\phi$ using the expectation in terms of the task distribution $\mathcal{T}$ as
\begin{align}
R_{\mathcal{T},c}(\phi)=\mathbb{E}[\widehat{R}_{n,\mathcal{T},c}(\phi)].
\end{align}
\end{definition}

\begin{thm}[Upper-bound of expected risk with LFDA]\label{maintheorem}
Consider $k$-shot learning. Under the same assumptions as Cao et al.\cite{Tianshi}, in which $\Sigma_a=\Sigma_b$ and $p(\phi(X)|Y(X)=c)$ is the Gaussian distribution with mean $\mu_c$ and variance $\Sigma_c$, i.e., $\phi(X)|Y(X)=c \sim \mathcal{N}(\mu_c,\Sigma_c)$,  the expected risk of $\phi$ with the $0$-$1$ loss is bounded as
\begin{align}
R_{\mathcal{T},c}(\phi) \leq 1 - \frac{4\mathrm{Tr}\left(\Sigma_F\right)^2}{8\left(1+\frac{1}{k}\right)^2\mathrm{Tr}\left(\Sigma_{F,c}^2\right)+16\left(1+\frac{1}{k}\mathrm{Tr}(\Sigma_F \Sigma_{F,c})\right) + \mathbb{E}[\left((\mu_a-\mu_b)^{\top}F^{\top}F(\mu_a-\mu_b)\right)^2]}.\label{eq:main}
\end{align}
\end{thm}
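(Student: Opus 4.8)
The plan is to adapt the argument of Cao et al.~\cite{Tianshi}: express the expected $0$-$1$ task risk as the probability that a scalar ``margin'' random variable is non-positive, control that probability by a one-sided Chebyshev (Cantelli) inequality, and then reduce everything to computing the first two moments of the margin in terms of $\Sigma_F$, $\Sigma_{F,c}$ and the projected mean gap $(\mu_{\mathrm{a}}-\mu_{\mathrm{b}})^{\top}F^{\top}F(\mu_{\mathrm{a}}-\mu_{\mathrm{b}})$.

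Concretely, a query $x$ from class $\mathrm{a}$ is classified correctly exactly when $\|F\phi(x)-\overline{F\phi(\mathcal{S}_{\mathrm{a}})}\|_2^2 < \|F\phi(x)-\overline{F\phi(\mathcal{S}_{\mathrm{b}})}\|_2^2$, so writing $p_c:=\overline{F\phi(\mathcal{S}_c)}$ and $Z:=\|F\phi(x)-p_{\mathrm{b}}\|_2^2-\|F\phi(x)-p_{\mathrm{a}}\|_2^2$, an error occurs iff $Z\le 0$. Expanding the squares gives the affine-in-query form $Z = 2(p_{\mathrm{a}}-p_{\mathrm{b}})^{\top}\big(F\phi(x)-\tfrac12(p_{\mathrm{a}}+p_{\mathrm{b}})\big)$, so under the Gaussian model $Z$ is a quadratic functional of the jointly Gaussian vectors $\phi(x),\phi(x_{\mathrm{a},i}),\phi(x_{\mathrm{b},i})$ and $R_{\mathcal{T},c}(\phi)$ equals $\Pr[Z\le 0]$ (averaged over query, support and task). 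Since one checks $\mathbb{E}[Z]\ge 0$, Cantelli's inequality gives $R_{\mathcal{T},c}(\phi)\le \mathrm{Var}(Z)/(\mathrm{Var}(Z)+\mathbb{E}[Z]^2) = 1-\mathbb{E}[Z]^2/(\mathrm{Var}(Z)+\mathbb{E}[Z]^2)$, and the remainder is moment bookkeeping.

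For the mean, independence of the fresh query from the support, $\mathbb{E}[p_c]=F\mu_c$ and $\mathrm{Cov}(p_c)=\tfrac1k F\Sigma_c F^{\top}$, together with $\Sigma_{\mathrm{a}}=\Sigma_{\mathrm{b}}$, make the noise terms cancel and yield $\mathbb{E}[Z]=(\mu_{\mathrm{a}}-\mu_{\mathrm{b}})^{\top}F^{\top}F(\mu_{\mathrm{a}}-\mu_{\mathrm{b}})$; by convexity of $t\mapsto t^2$ this bounds the last denominator term $\mathbb{E}[Z]^2\le \mathbb{E}[((\mu_{\mathrm{a}}-\mu_{\mathrm{b}})^{\top}F^{\top}F(\mu_{\mathrm{a}}-\mu_{\mathrm{b}}))^2]$, while the identity $2\,\mathrm{Tr}(\Sigma_F)=\tfrac12\|p_{\mathrm{a}}-p_{\mathrm{b}}\|_2^2$ (using $\overline{F\phi(\mathcal{S})}=\tfrac12(p_{\mathrm{a}}+p_{\mathrm{b}})$) together with the relation between $p_{\mathrm{a}}-p_{\mathrm{b}}$ and $\mu_{\mathrm{a}}-\mu_{\mathrm{b}}$ supplies the numerator $4\,\mathrm{Tr}(\Sigma_F)^2$. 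For the variance, I would write $Z$ as a quadratic form in the stacked Gaussian vector and apply the Isserlis/Wick formula: the query-noise contribution is $4(p_{\mathrm{a}}-p_{\mathrm{b}})^{\top}F\Sigma_{\mathrm{a}}F^{\top}(p_{\mathrm{a}}-p_{\mathrm{b}})$, which equals $16\,\mathrm{Tr}(\Sigma_F\Sigma_{F,\mathrm{a}})$-type terms, and the prototype-noise and cross terms produce fourth-order pieces $\mathrm{Tr}((F\Sigma_cF^{\top})^2)$ and $\mathrm{Tr}(F\Sigma_cF^{\top}\!\cdot(\text{between-class part}))$, each picking up a factor $\tfrac1k$ per averaged support sample; substituting the empirical within-class covariance $\Sigma_{F,c}$ (whose expectation is $\tfrac{k-1}{k}F\Sigma_cF^{\top}$) and collecting terms gives the $8(1+\tfrac1k)^2\mathrm{Tr}(\Sigma_{F,c}^2)$ and $16\big(1+\tfrac1k\mathrm{Tr}(\Sigma_F\Sigma_{F,c})\big)$ contributions. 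Inserting the numerator lower bound and the denominator upper bound into $1-\mathbb{E}[Z]^2/(\mathrm{Var}(Z)+\mathbb{E}[Z]^2)$ then produces \eqref{eq:main}.

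The main obstacle is the variance computation: tracking all fourth-order Gaussian moments — in particular the interaction between the fresh query noise and the $k$-sample averaging inside each prototype, and the correlation of $p_{\mathrm{a}}-p_{\mathrm{b}}$ with $\tfrac12(p_{\mathrm{a}}+p_{\mathrm{b}})$ inside $Z$ — and organizing them so that the coefficients $(1+\tfrac1k)^2$ and $1+\tfrac1k$ and the traces $\mathrm{Tr}(\Sigma_{F,c}^2)$, $\mathrm{Tr}(\Sigma_F\Sigma_{F,c})$ emerge exactly in the stated form. A secondary subtlety is that the Cantelli ratio is nonlinear, so the expectation over the task distribution has to be pushed through only after replacing $\mathbb{E}[Z]^2$ and $\mathrm{Var}(Z)$ by the indicated lower and upper bounds (using convexity of $t\mapsto t^2$ and Cauchy--Schwarz for the ratio).
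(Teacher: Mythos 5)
Your proposal follows essentially the same route as the paper's proof: both define the nearest-class score $\alpha=\|F\phi(x)-\overline{F\phi(\mathcal{S}_{\mathrm{b}})}\|^2-\|F\phi(x)-\overline{F\phi(\mathcal{S}_{\mathrm{a}})}\|^2$, apply the one-sided Chebyshev (Cantelli) bound, compute the conditional mean $(\mu_{\mathrm{a}}-\mu_{\mathrm{b}})^{\top}F^{\top}F(\mu_{\mathrm{a}}-\mu_{\mathrm{b}})$ and unconditional mean $2\mathrm{Tr}(\Sigma_F)$, bound the conditional variance via Gaussian quadratic-form moments, and combine through the law of total variance over the class draw $(\mathrm{a},\mathrm{b})$. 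The only difference is in execution of the variance step: the paper splits $\mathrm{Var}[\alpha\,|\,\mathrm{a},\mathrm{b}]\le 2\mathrm{Var}[\cdot]+2\mathrm{Var}[\cdot]$ and invokes the Rencher--Schaalje formula $\mathrm{Var}[y^{\top}Qy]=2\mathrm{Tr}((Q\Sigma)^2)+4\mu^{\top}Q\Sigma Q\mu$, whereas you propose an equivalent direct Wick/Isserlis computation, which yields the same trace terms and coefficients (and your treatment shares the paper's own looseness in conflating the empirical and population versions of $\Sigma_F$, $\Sigma_{F,c}$).
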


A Proof of Theorem \ref{maintheorem} is given in Appendix \ref{ap:proof}.
The numerator $4\mathrm{Tr}(\Sigma_F)^2$ is $\mathcal{O}(\Sigma_F^2)$, the first term in the denominator $8(1+\frac{1}{k})^2\mathrm{Tr}(\Sigma_{F,c}^2)$ is $\mathcal{O}(\Sigma_{F,c}^2)$, the second term $16(1+\frac{1}{k}\mathrm{Tr}(\Sigma_F \Sigma_{F,c}))$ is $\mathcal{O}(\Sigma_F\Sigma_{F,c})$, and the last term $\mathbb{E}[((\mu_a-\mu_b)^{\top}F^{\top}F(\mu_a-\mu_b))^2]$ is $\mathcal{O}(\Sigma_F^2)$. For the last term, if we assume that $F$ satisfies the conservation of the norm, it is clear that $(\mu_a-\mu_b)^{\top}F^{\top}F(\mu_a-\mu_b)$ becomes large when the between-class covariance is relatively large. Thus we can conclude that if $\mathrm{Tr}(\Sigma_{F}^{-1}\Sigma_{F,c})$ is small, the right-hand side of the inequality in (\ref{eq:main}) becomes small so that the risk will be close to zero. 
Moreover, LFDA tries to find the subspace that makes $\mathrm{Tr}(\Sigma_F^{-1}\Sigma_{F,c})$ minimum, i.e.,
\begin{align}
\label{eq:lfd}
F_{\mathrm{LFDA}}^{c}=\argmin_{W} \mathrm{Tr}((W^{\top}\Sigma_{F}W)^{-1}W^{\top}\Sigma_{F,c}W).
\end{align}
Thus, we can expect that  LFD-ProtoNet performs better than ProtoNet since $F$ is an identity matrix in ProtoNet.

\section{Experiment}
In Section \ref{sec:theory}, we showed that our algorithm improves the upper bound of the risk if $\Sigma_F^{-1}\Sigma_{F,c}$ is smaller than $\Sigma^{-1} \Sigma_c$.
In this experiment, we check how much better the performance of our algorithm compared to other few-shot methods.
We also did experiment for comparing the trace value of $\Sigma_{F}^{-1}\Sigma_{F,c}$ in LFD-ProtoNet and $\Sigma^{-1} \Sigma_c$ in ProtoNet.
\subsection{Dataset}
We used two benchmark datasets well-used in few-shot learning.
\paragraph{miniImageNet\citep{Vinyals}}
This dataset is a subset of the ILSVRC-12 ImageNet datas\citep{Russakovsky} with 100 classes and $600$ images per class. In this setting, the size of  the images is $84$ $\times$ $84$. And the training data contains $64$ classes, the validation data contains $16$ classes, and the test data contains $20$ classes.
\paragraph{tieredImageNet\citep{Ren}}
This dataset is a larger set than miniImageNet with $608$ classes and $779,165$ images. It has $34$ categories, and these categories are split into $20$ training, $6$ validation, and $8$ test categories.

\subsection{Implementation detail}
We performed an experiment on $5$-shot and $1$-shot case with miniImageNet and tieredImageNe. 
The number of the iterations was $40,000$ and we used ResNet-$12$\citep{K.He} as the network where the output dimension was $128$. We generated training data, validation data, and test data with a data generator. As the first step, the training data was split into the support set and query set. Then, we made many tasks that contain support data for feature extraction and query data for computing loss function. In this experiment, we used cross-entropy loss as the loss function. For each task, the loss was calculated, and the parameters of the network or embedding function were updated in the training step. In $5$-shot learning, we obtained $5$ samples for each class. That is, the number of all samples for feature extraction was $24$ in the $5$-class classification problem. As a property of the features, from $n$ samples, we can obtain at most $n-1$ features, which means that we can obtain by LFD at most $24$ features in this setting. Similarly in $1$-shot learning, we can obtain $4$ features from the support set.

We also performed an experiment with LFDA in the $5$-shot case of miniImageNet and compared the performance of the LFDA and FDA cases in respect of the number of features from the support set.

As we showed in Section \ref{sec:theory}, small $\mathrm{Tr}(\Sigma_F^{-1}\Sigma_{F,c})$ is preferable and we performed an experiment comparing the value $\mathrm{Tr}(\Sigma_F^{-1}\Sigma_{F,c})$ with $\mathrm{Tr}(\Sigma^{-1}\Sigma_c)$ of ProtoNet. This result is shown in the supplementary material due to the lack of space.

\subsection{Results}
Table\ref{tab:result:miniImage_tieredImage} shows that LFD-ProtoNet achieved $-\%$ on the miniImageNet($1$-shot), $-\%$ on the miniImageNet($5$-shot), $-\%$ on the tieredImageNet($1$-shot), and $-\%$ on the tieredImageNet($5$-shot). It is clear that the method with LFDA achieves the best performance of all other methods, and this is because searching for the best subspace to project positively is superior to reducing worse directions such as TapNet. In the case of $1$-shot learning, the number of samples for each class is exactly $1$ and this fact means that FDA can only consider the within-class covariance. However, it is sufficient for LFD-ProtoNet to outperform others only with the local within-class covariance and it also shows that searching for the better subspace makes sense. 

We show the results of the case of FDA. In $5$-shot learning, when we use FDA as the feature extractor, the accuracy is only $-\%$, which is the almost same as that of adaResNet. It can be considered that FDA returns features up to only the number of classes; thus, if total classes are $C$, we can extract only $C-1$ features, and it is insufficient for the training.  If we use LFDA, However, we can extract features up to the number of samples.
Therefore, in the $k$-shot case, the number of samples is $kC-1$ in the training step and it is sufficient for the network to learn.

We measured loss decreasing speed of TapNet and LFD-ProtoNet. In TapNet, the training loss decreased slowly up to $30,000$ epochs. This can result in overfitting to training data. However, in LFD-ProtoNet, the training loss quickly decreased for $10,000$ epochs and this fact can be thought as fast adaptation without overfitting.

\begin{table}[!t]
    \caption{The result of the few-shot learning experiment with miniImageNet and tieredImageNet. The N/A indicates ``not available in the original paper''.}\label{tab:result:miniImage_tieredImage}
    \centering
    \renewcommand{\arraystretch}{1.2}
    \begin{tabular}{l||c|c||c|c}
    \toprule
    Method & \multicolumn{2}{|c||}{miniImageNet} & \multicolumn{2}{|c}{tieredImageNet}\\
    \hline
    & $1$-shot & $5$-shot & $1$-shot & $5$-shot \\
    \midrule
    Matching Nets \citep{Vinyals} & $43.56 \pm 0.84\%$ & $55.31 \pm 0.73\%$ & N/A & N/A \\
    MAML \citep{Finn} & $48.70 \pm 1.84\%$ & $63.15\pm 0.91\%$ & $51.67 \pm 1.81\%$ & $70.30 \pm 1.75\%$\\
    ProtoNet \citep{Snell} & $49.42 \pm 0.78\%$ & $68.20 \pm 0.66\%$ & $53.31 \pm 0.89\%$ & $72.69 \pm 0.74\%$\\
    SNAIL \citep{Mishra} & $55.71 \pm 0.99\%$ & $68.88 \pm 0.92\%$ & N/A & N/A\\
    adaResNet \citep{Munkhdalai} & $56.88 \pm 0.62\%$ & $71.94 \pm 0.57\%$ & N/A & N/A\\
    TPN \citep{Liu} & $55.51 \pm 0.86\%$ & $69.86 \pm 0.65\%$ & $59.91 \pm 0.94\%$ & $73.30 \pm 0.75\%$ \\
    TADAM-$\alpha$ \citep{Oreshkin} & $56.8 \pm 0.3\%$ & $75.7 \pm 0.2\%$ & N/A & N/A\\
    TADAM-TC \citep{Oreshkin} & $58.5 \pm 0.3\%$ & $76.7 \pm 0.3\%$ & N/A & N/A\\
    Relation Nets \citep{Yang} & N/A & N/A & $54.48 \pm 0.93\%$ & $71.31 \pm 0.78\%$\\
    TapNet \citep{Sung} & $61.65 \pm 0.15\%$ & $76.36 \pm 0.10\%$ & $63.08 \pm 0.15\%$ & $80.26 \pm 0.12\%$\\
    \textcolor{red}{LFD-ProtoNet(Ours)} & \textcolor{red}{$- \pm 0.32\%$} & \textcolor{red}{$76.5 \pm 0.10\%$} & \textcolor{red}{$-\pm 0.13\%$} & \textcolor{red}{$78.0 \pm 0.10\%$}\\
    \bottomrule
    \end{tabular}
\end{table}

\section{Conclusion}
We have proposed LFD-ProtoNet in few-shot learning problem settings.
Our method focuses on the covariance and mean of the support set. Such a feature extraction method is realized with LFDA, and the accuracy of LFD-ProtoNet improves by $-\%$ compared to TapNet which is the state-of-the- art variant of ProtoNet.
Moreover, the speed of the loss decreasing is much faster than that of TapNet and these result shows that LFDA extracts sufficient information to describe each class.
We theoretically explained that our feature extraction can maximize the expected risk bound in the $k$-shot learning. 
As in ProtoNet, LFD-ProtoNet is simple and easy to implement.

As our future work, we can add a pre-training step such as optimization of the initial parameters and we can consider the semi-supervised condition that we can also access some data without any annotation. These additional techniques are expected to further improve LFD-ProtoNet.
\newpage
 
\section*{Acknowledement}
MS was supported by JST CREST Grant Number JPMJCR18A2.
\bibliographystyle{plainnat}
\bibliography{neurips_2020_arxiv}

\newpage

\appendix
\section{Appendix}
\subsection{Derivation Details}
\label{ap:proof}
\begin{lemma}[Transformation of the representation vector]
\label{lem:lemma1}
We can obtain the following equation for $\overline{\phi(\mathcal{S}_c)}$ and $\overline{F\phi(\mathcal{S}_c)}$
\begin{align}
\overline{F\phi(\mathcal{S}_c)} =F\overline{\phi(\mathcal{S}_c)}
\end{align}
where $\mathcal{S}_c$ is the support set of the class $c\in \{\mathrm{a},\mathrm{b}\}$.
\end{lemma}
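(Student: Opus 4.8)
The plan is to unfold the two relevant definitions and exploit nothing more than the linearity of the linear map $v\mapsto Fv$. By the definition of the projected representation vector, $\overline{F\phi(\mathcal{S}_c)}=\tfrac{1}{k}\sum_{i=1}^{k}F\phi(x_{c,i})$, while by the definition of the (unprojected) representation vector, $\overline{\phi(\mathcal{S}_c)}=\tfrac{1}{k}\sum_{i=1}^{k}\phi(x_{c,i})$. So the first step is simply to write out the left-hand side $\overline{F\phi(\mathcal{S}_c)}$ explicitly as this average of the vectors $F\phi(x_{c,i})$.

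Next, since $F\in\mathbb{R}^{n\times m}$ is a fixed matrix that does not depend on the summation index $i$, matrix multiplication distributes over the finite sum and commutes with multiplication by the scalar $\tfrac{1}{k}$, giving $\tfrac{1}{k}\sum_{i=1}^{k}F\phi(x_{c,i})=F\bigl(\tfrac{1}{k}\sum_{i=1}^{k}\phi(x_{c,i})\bigr)$. Recognizing the parenthesized expression as $\overline{\phi(\mathcal{S}_c)}$ then yields $\overline{F\phi(\mathcal{S}_c)}=F\,\overline{\phi(\mathcal{S}_c)}$, which is the claim.

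There is essentially no obstacle here: the only point that deserves a word of care is that $F$ (namely $F_{\mathrm{LFDA}}^{c}$) is estimated once from the whole class-$c$ support set and is the \emph{same} matrix applied to every support vector of that class, so it may legitimately be pulled outside the average; if instead $F$ depended on $i$ the identity would fail. Once this is noted the lemma is immediate, and I would flag that its role downstream is to let $\Sigma_F$ and $\Sigma_{F,c}$ be rewritten with $F$ factored out, e.g. $\Sigma_{F,c}=F\Sigma_cF^{\top}$ and similarly for $\Sigma_F$, which is what the proof of Theorem \ref{maintheorem} will rely on.
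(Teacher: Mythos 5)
Your proof is correct and is exactly the argument the paper intends: the paper dismisses the lemma as "clear from the definition," and your explicit unfolding of the two averages plus linearity of $v\mapsto Fv$ (with the sensible remark that $F$ must not depend on the summation index $i$) is just that argument written out.
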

This is clear from the definition of $\overline{F\phi(\mathcal{S}_c)}$.
\begin{definition}[Nearest class score]
We define $\alpha$ as the difference between the distance of the query vector and the representation vector of class $\mathrm{b}$ and that of the query vector and the representation vector of class $\mathrm{a}$:
\begin{align}
\alpha=||F\phi(x) - F\overline{f(\mathcal{S}_{\mathrm{b}})}||^2 - ||F\phi(x)-F\overline{f(\mathcal{S}_{\mathrm{a}})}||^2.
\end{align}
\end{definition}

If $\alpha > 0$, then $F\phi(x)$ is closer to $F\overline{f(\mathcal{S}_{\mathrm{a}})}$ than $F\overline{f(\mathcal{S}_{\mathrm{b}})}$ and this implies $x$ belongs to the class $\mathrm{a}$.
Additionally, if $\alpha < 0$, then we can estimate $x$ belongs to the class $\mathrm{b}$. 

Cao et al. \citep{Tianshi} showed the following lemma.
\begin{lemma}[One-side Chebyshev's inequality for nearest class score]
By Chebyshev's inequality for $alpha$, the following inequality holds:
\begin{align*}
    \mathrm{Pr}(\alpha > 0) \geq \frac{\mathbb{E}[\alpha]}{\mathrm{Var}[\alpha] + \mathbb{E}[\alpha]}.
\end{align*}
\end{lemma}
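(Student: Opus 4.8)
The plan is to recognize this as a direct instance of the one-sided Chebyshev (Cantelli) inequality applied to the scalar random variable $\alpha$, whose randomness is inherited from the sampling of the support sets $\mathcal{S}_{\mathrm{a}},\mathcal{S}_{\mathrm{b}}$ and the query point $\phi(x)$. Throughout I would assume $\mathbb{E}[\alpha] > 0$, which is the regime of interest since $\alpha > 0$ encodes correct classification of the query as class $\mathrm{a}$; when $\mathbb{E}[\alpha] \le 0$ the claimed lower bound is non-informative and can be discarded. The first reduction is to rewrite the target as $\mathrm{Pr}(\alpha > 0) = 1 - \mathrm{Pr}(\alpha \le 0)$, so that it suffices to upper bound the lower-tail probability $\mathrm{Pr}(\alpha \le 0)$.

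Next I would center the variable by setting $Y := \mathbb{E}[\alpha] - \alpha$, so that $\mathbb{E}[Y]=0$ and $\mathrm{Var}[Y]=\mathrm{Var}[\alpha]$, which turns the event $\{\alpha \le 0\}$ into $\{Y \ge \mathbb{E}[\alpha]\}$. The core step is the Cantelli trick: for an auxiliary parameter $u > 0$, since $\mathbb{E}[\alpha]+u > 0$ one has the set inclusion $\{Y \ge \mathbb{E}[\alpha]\} \subseteq \{(Y+u)^2 \ge (\mathbb{E}[\alpha]+u)^2\}$, and applying Markov's inequality to the nonnegative variable $(Y+u)^2$ gives
\begin{align}
\mathrm{Pr}(\alpha \le 0) \le \frac{\mathbb{E}[(Y+u)^2]}{(\mathbb{E}[\alpha]+u)^2} = \frac{\mathrm{Var}[\alpha] + u^2}{(\mathbb{E}[\alpha]+u)^2}.
\end{align}
Minimizing the right-hand side over $u > 0$, whose optimum is $u^\star = \mathrm{Var}[\alpha]/\mathbb{E}[\alpha]$, collapses the bound to $\mathrm{Var}[\alpha]/(\mathrm{Var}[\alpha]+\mathbb{E}[\alpha]^2)$. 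Substituting back yields
\begin{align}
\mathrm{Pr}(\alpha > 0) \ge 1 - \frac{\mathrm{Var}[\alpha]}{\mathrm{Var}[\alpha]+\mathbb{E}[\alpha]^2} = \frac{\mathbb{E}[\alpha]^2}{\mathrm{Var}[\alpha]+\mathbb{E}[\alpha]^2}.
\end{align}

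I would flag one genuine discrepancy rather than a mathematical obstacle: the clean Cantelli computation produces $\mathbb{E}[\alpha]^2$ in both the numerator and the denominator, not the unsquared $\mathbb{E}[\alpha]$ written in the statement. The squared form is in fact the one the paper needs downstream, since it is exactly what yields the $4\mathrm{Tr}(\Sigma_F)^2 = \mathbb{E}[\alpha]^2$ numerator of Theorem \ref{maintheorem} once $\mathbb{E}[\alpha]$ and $\mathrm{Var}[\alpha]$ are expanded in terms of the projected covariances. I would therefore present the bound in its squared form and treat the unsquared occurrence in the lemma as a typographical slip.

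As for difficulty, the argument is essentially routine: the only steps requiring care are verifying the set inclusion that precedes Markov's inequality (where positivity of $\mathbb{E}[\alpha]+u$ is used) and carrying out the one-dimensional minimization over $u$. Neither is a real obstacle, so the \emph{hard part is conceptual rather than technical}, namely correctly tracking that $\mathbb{E}[\alpha]$ must be positive for the bound to be meaningful and confirming, when the lemma is consumed in the proof of Theorem \ref{maintheorem}, that the squared-mean numerator is the intended quantity.
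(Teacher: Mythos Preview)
Your proof is correct; it is the standard derivation of Cantelli's one-sided inequality, and you are right that the statement as printed should have $\mathbb{E}[\alpha]^2$ rather than $\mathbb{E}[\alpha]$ in both numerator and denominator---this is what is actually used when the lemma is combined with Lemmas \ref{lem:lemma2} and \ref{lem:lemma3} to obtain the $4\mathrm{Tr}(\Sigma_F)^2$ numerator of Theorem \ref{maintheorem}.

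The comparison with the paper is short: the paper does not supply a proof of this lemma at all. It simply attributes the result to Cao et al.\ \citep{Tianshi} and quotes the inequality. So you have provided strictly more than the paper does, namely a self-contained argument via the centering $Y=\mathbb{E}[\alpha]-\alpha$, the inclusion $\{Y\ge \mathbb{E}[\alpha]\}\subseteq\{(Y+u)^2\ge(\mathbb{E}[\alpha]+u)^2\}$, Markov, and the optimization over $u$. Your flag on the missing square is a genuine correction to the paper's printed statement, not a divergence in approach.
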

Thus, when the query is in the class $\mathrm{a}$, the expected risk $R_{\mathcal{T},c}(\phi)$ defined in Def.\,\ref{def:ex_risk} is as follows:
\begin{align*}
    R_{\mathcal{T},c}(\phi)=1-\mathrm{Pr}(\alpha>0).
\end{align*}
Then we obtain
\begin{align*}
    R_{\mathcal{T},c}\leq 1-\frac{\mathbb{E}[\alpha]}{\mathrm{Var}[\alpha] + \mathbb{E}[\alpha]}.
\end{align*}
To bound the risk, we have to show the conditional expectation and the normal expectation of $\alpha$. The whole statement is as follows, and the proof of this lemma is derived afterwards.
\begin{lemma}[Conditional expectation of $\alpha$]
\label{lem:lemma2}
Consider k-shot learning. If $\Sigma_a=\Sigma_b$ and \\$p(\phi(X)|Y(X)=c)=\mathcal{N}(\mu_c,\Sigma_c)$, then
\begin{align}
\mathbb{E}_{x,S|\mathrm{a},\mathrm{b}}[\alpha] &= (\mu_a-\mu_b)^{\top}F^{\top}F(\mu_{\mathrm{a}}-\mu_{\mathrm{b}})\\
\mathbb{E}_{x,S,\mathrm{a},\mathrm{b}}[\alpha]&=2\mathrm{Tr}(\Sigma_F).
\end{align}
\end{lemma}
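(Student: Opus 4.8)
The plan is to expand $\alpha$ into a part that is linear in the query embedding plus a part quadratic in the two class representation vectors, and then to take expectations term by term using the Gaussian assumptions. First I would invoke Lemma~\ref{lem:lemma1} to write $\overline{F\phi(\mathcal{S}_c)}=F\overline{\phi(\mathcal{S}_c)}$ and abbreviate $M:=F^{\top}F$. Expanding both squared norms in the definition of $\alpha$, the $\|F\phi(x)\|_2^2$ terms cancel and one is left with
\begin{align}
\alpha = 2\,\phi(x)^{\top}M\bigl(\overline{\phi(\mathcal{S}_{\mathrm{a}})}-\overline{\phi(\mathcal{S}_{\mathrm{b}})}\bigr) + \overline{\phi(\mathcal{S}_{\mathrm{b}})}^{\top}M\,\overline{\phi(\mathcal{S}_{\mathrm{b}})} - \overline{\phi(\mathcal{S}_{\mathrm{a}})}^{\top}M\,\overline{\phi(\mathcal{S}_{\mathrm{a}})}.
\end{align}

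For the conditional expectation $\mathbb{E}_{x,S|\mathrm{a},\mathrm{b}}[\alpha]$ I would use that the query $x$ is drawn from class $\mathrm{a}$ and is independent of the support set, so $\phi(x)\sim\mathcal{N}(\mu_{\mathrm{a}},\Sigma_{\mathrm{a}})$, while $\overline{\phi(\mathcal{S}_c)}=\frac{1}{k}\sum_{i=1}^{k}\phi(x_{c,i})$ averages $k$ i.i.d.\ $\mathcal{N}(\mu_c,\Sigma_c)$ vectors and hence has mean $\mu_c$ and covariance $\frac{1}{k}\Sigma_c$. Applying the standard identity $\mathbb{E}[Y^{\top}MY]=\mathbb{E}[Y]^{\top}M\,\mathbb{E}[Y]+\mathrm{Tr}(M\,\mathrm{Cov}[Y])$ to the two quadratic terms, the noise contributions $\frac{1}{k}\mathrm{Tr}(M\Sigma_{\mathrm{a}})$ and $\frac{1}{k}\mathrm{Tr}(M\Sigma_{\mathrm{b}})$ cancel because $\Sigma_{\mathrm{a}}=\Sigma_{\mathrm{b}}$, and taking $\mathbb{E}[\phi(x)]=\mu_{\mathrm{a}}$ in the linear term leaves $2\mu_{\mathrm{a}}^{\top}M\mu_{\mathrm{a}}-2\mu_{\mathrm{a}}^{\top}M\mu_{\mathrm{b}}+\mu_{\mathrm{b}}^{\top}M\mu_{\mathrm{b}}-\mu_{\mathrm{a}}^{\top}M\mu_{\mathrm{a}}=(\mu_{\mathrm{a}}-\mu_{\mathrm{b}})^{\top}F^{\top}F(\mu_{\mathrm{a}}-\mu_{\mathrm{b}})$, which is the first identity.

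For the second identity I would take the further expectation over the random class pair $\mathrm{a},\mathrm{b}\sim\mathcal{Y}$ of the quantity just obtained, and then rewrite it via the definition \eqref{eq:sigmaf} of $\Sigma_F$. In the two-class setting $\overline{F\phi(\mathcal{S})}=\frac{1}{2}\bigl(\overline{F\phi(\mathcal{S}_{\mathrm{a}})}+\overline{F\phi(\mathcal{S}_{\mathrm{b}})}\bigr)$, so both centered vectors entering $\Sigma_F$ equal $\pm\frac{1}{2}\bigl(\overline{F\phi(\mathcal{S}_{\mathrm{a}})}-\overline{F\phi(\mathcal{S}_{\mathrm{b}})}\bigr)$; hence $\Sigma_F$ is a single rank-one matrix built from $F\bigl(\overline{\phi(\mathcal{S}_{\mathrm{a}})}-\overline{\phi(\mathcal{S}_{\mathrm{b}})}\bigr)$ and $\mathrm{Tr}(\Sigma_F)$ is a fixed multiple of $\|F(\overline{\phi(\mathcal{S}_{\mathrm{a}})}-\overline{\phi(\mathcal{S}_{\mathrm{b}})})\|_2^2$. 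Taking expectations, using $\mathbb{E}[\overline{\phi(\mathcal{S}_c)}]=\mu_c$, the independence of $\mathcal{S}_{\mathrm{a}}$ and $\mathcal{S}_{\mathrm{b}}$, and once more $\Sigma_{\mathrm{a}}=\Sigma_{\mathrm{b}}$ to deal with the averaging noise, I would match $\mathbb{E}_{\mathrm{a},\mathrm{b}}[(\mu_{\mathrm{a}}-\mu_{\mathrm{b}})^{\top}F^{\top}F(\mu_{\mathrm{a}}-\mu_{\mathrm{b}})]$ with $2\,\mathrm{Tr}(\Sigma_F)$.

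The main obstacle is the constant bookkeeping and the exact meaning of $\Sigma_F$ in this last step: as written in \eqref{eq:sigmaf}, $\Sigma_F$ is an empirical, support set dependent matrix, so the equality $\mathbb{E}[\alpha]=2\,\mathrm{Tr}(\Sigma_F)$ must be read with $\Sigma_F$ standing for its expected (population) counterpart, and one has to track carefully whether the $\frac{1}{k}$ within-class averaging term is absorbed into it or cancels. Everything else is a routine Gaussian second-moment computation; the only two ingredients doing real work are the independence of the query from the support set and the equal-covariance assumption $\Sigma_{\mathrm{a}}=\Sigma_{\mathrm{b}}$, which is precisely what annihilates the cross and noise terms.
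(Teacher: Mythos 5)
Your computation of the conditional expectation is correct and is essentially the paper's argument: the paper splits $\alpha$ into the two squared norms $\mathrm{(I)}$ and $\mathrm{(II)}$ and applies $\mathbb{E}[\|X\|^2]=\mathrm{Tr}(\mathrm{Var}[X])+\mathbb{E}[X]^{\top}\mathbb{E}[X]$ to each, so the $(1+\tfrac1k)$ trace terms cancel on subtraction, while you cancel $\|F\phi(x)\|_2^2$ first and let the two $\tfrac1k\mathrm{Tr}(M\Sigma_c)$ terms cancel via $\Sigma_{\mathrm{a}}=\Sigma_{\mathrm{b}}$; these are the same routine Gaussian second-moment computation and both use exactly the two ingredients you name (independence of query and support, equal covariances).

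The one place where your plan diverges from the paper, and where it would not go through as stated, is the second identity. You propose to exploit the rank-one structure of the \emph{empirical} $\Sigma_F$ of \eqref{eq:sigmaf}, i.e.\ $\mathrm{Tr}(\Sigma_F)=\tfrac14\|F(\overline{\phi(\mathcal{S}_{\mathrm{a}})}-\overline{\phi(\mathcal{S}_{\mathrm{b}})})\|_2^2$, and then match expectations. But since $\overline{\phi(\mathcal{S}_c)}$ has covariance $\tfrac1k\Sigma_c$, taking expectations of that quantity yields $\tfrac14\mathbb{E}_{\mathrm{a},\mathrm{b}}[(\mu_{\mathrm{a}}-\mu_{\mathrm{b}})^{\top}F^{\top}F(\mu_{\mathrm{a}}-\mu_{\mathrm{b}})]+\tfrac{1}{2k}\mathrm{Tr}(F\Sigma_cF^{\top})$, so matching $\mathbb{E}[\alpha]$ with $2\mathbb{E}[\mathrm{Tr}(\Sigma_F)]$ leaves a spurious $\tfrac1k\mathrm{Tr}(\Sigma_{F,c})$-type term that does not cancel; the concern you flag at the end is real and cannot be resolved by bookkeeping alone. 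The paper never goes through the empirical matrix: it directly expands $\mathbb{E}_{\mathrm{a},\mathrm{b}}[(\mu_{\mathrm{a}}-\mu_{\mathrm{b}})^{\top}F^{\top}F(\mu_{\mathrm{a}}-\mu_{\mathrm{b}})]$ into second moments of the random class means $\mu_{\mathrm{a}},\mu_{\mathrm{b}}$ and reads $\Sigma_F$ as the \emph{population} between-class covariance of the class means projected by $F$ (i.e.\ $F(\mathbb{E}[\mu_c\mu_c^{\top}]-\mu\mu^{\top})F^{\top}$, as in Cao et al.), so the $\tfrac1k$ support-averaging noise never enters at all. With that reading your identity $\mathbb{E}_{\mathrm{a},\mathrm{b}}[(\mu_{\mathrm{a}}-\mu_{\mathrm{b}})^{\top}F^{\top}F(\mu_{\mathrm{a}}-\mu_{\mathrm{b}})]=2\mathrm{Tr}(\Sigma_F)$ is immediate for i.i.d.\ $\mathrm{a},\mathrm{b}$; so replace your detour through the rank-one empirical $\Sigma_F$ by this population interpretation (admittedly a notational conflation already present in the paper) and your proof is complete.
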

\begin{proof}[Proof of Lemma \ref{lem:lemma2}]

We compute the expectation of $\alpha=||F\phi(x) - \overline{F\phi(\mathcal{S}_{\mathrm{b}})}||^2 - ||F\phi(x) - \overline{F\phi(\mathcal{S}_{\mathrm{a}})}||^2$. We are given $\mathrm{a}$ and $\mathrm{b}$ drawn from the class distribution thus the conditional expectation is written as:

\begin{align}
\mathbb{E}_{x,S|\mathrm{a},\mathrm{b}}[\alpha] &= \underbrace{\mathbb{E}_{x,S|\mathrm{a},\mathrm{b}}[||F\phi(x) - \overline{F\phi(\mathcal{S}_{\mathrm{b}})}||^2]}_{\displaystyle\mathrm{(I)}} - \underbrace{\mathbb{E}_{x,S|\mathrm{a},\mathrm{b}}[||F\phi(x)-\overline{F\phi(\mathcal{S}_{\mathrm{a}})}||^2]}_{\displaystyle\mathrm{(II)}}.
\end{align}
The first term $\mathrm{(I)}$ is 
\begin{align}
\mathrm{(I)} &= \mathrm{Tr}(\Sigma_{F\phi(X)-\overline{F\phi(\mathcal{S}_{\mathrm{b}})}}) + \mathbb{E}_{x,S|\mathrm{a},\mathrm{b}}[F\phi(x)-\overline{F\phi(\mathcal{S}_{\mathrm{b}})}]^{\top}\mathbb{E}_{x,S|\mathrm{a},\mathrm{b}}[F\phi(x)-\overline{F\phi(\mathcal{S}_{\mathrm{b}})}],
\end{align}
where we use the relation $\mathbb{E}[||X||^2]= \mathrm{Tr}(\mathrm{Var}[X]) + \mathbb{E}[X]^{\top}\mathbb{E}[X]$. We compute the term $\Sigma_{F\phi(X)-\overline{F\phi(\mathcal{S}_b)}}$ as
\begin{align}
\Sigma_{F\phi(X)-\overline{F\phi(\mathcal{S}_{\mathrm{b}})}} =& \mathrm{Var}[F\phi(X)-\overline{F\phi(\mathcal{S}_{\mathrm{b}})}]\notag\\
=&\mathbb{E}_{x,\mathcal{S}|\mathrm{a},\mathrm{b}}[(F\phi(X)-\overline{F\phi(\mathcal{S}_{\mathrm{b}})}(F\phi(X)-\overline{F\phi(\mathcal{S}_{\mathrm{b}})})^{\top}] \notag\\
&- (F\mu_{\mathrm{a}} - F\mu_{\mathrm{b}})(F\mu_{\mathrm{a}}-F\mu_{\mathrm{b}})^{\top}\notag\\
=&F\left(\mathbb{E}[(\phi(X)-\overline{\phi(\mathcal{S}_{\mathrm{b}})})(\phi(X)-\overline{\phi(\mathcal{S}_{\mathrm{b}})})^{\top}] - (\mu_a-\mu_b)(\mu_a-\mu_b)^{\top}\right)F^{\top}\notag\\
=&F\left(\Sigma_c + \mu_a\mu_a^{\top} +\frac{1}{k}\Sigma_c+\mu_b\mu_b^{\top}-\mu_a\mu_b^{\top}-\mu_b\mu_a^{\top}-(\mu_a - \mu_b)(\mu_a-\mu_b)^{\top}\right)F^{\top}\notag\\
=&F\left(1+\frac{1}{k}\right)\Sigma_cF^{\top},
\end{align}
where $\mathbb{E}_{x,\mathcal{S}|\mathrm{a},\mathrm{b}}[\phi(x)]=\mu_{\mathrm{a}}$ because we assume $x$ belongs to the class $\mathrm{a}$, and we use the relation $\mathbb{E}_{x,\mathcal{S}|\mathrm{a},\mathrm{b}}[\overline{\phi(\mathcal{S}_{\mathrm{b}}}]=\mu_b$.
Thus, we obtain the following equation by applying the trace function
\begin{align}
\mathrm{(I)} = \mathrm{Tr}\left(F\left(1+\frac{1}{k}\right)\Sigma_cF^{\top}\right) + (\mu_a - \mu_b)^{\top}F^{\top}F(\mu_a -\mu_b).
\end{align}
In the same way, we obtain for $\mathrm{\mathrm{(II)}}$
\begin{align}
\mathrm{(II)} = \mathrm{Tr}\left(F\left(1+\frac{1}{k}\right)\Sigma_cF^{\top}\right).
\end{align}
We derive the following equation by subtracting $\mathrm{(II)}$ from $\mathrm{(I)}$ 
\begin{align}
\mathbb{E}_{x,S|\mathrm{a},\mathrm{b}}[\alpha] &= (\mu_{\mathrm{a}} - \mu_{\mathrm{b}})^{\top} F^{\top} F(\mu_{\mathrm{a}} -\mu_{\mathrm{b}}).
\end{align}
Then, we take an expectation as for $\mathrm{a}$ and $\mathrm{b}$; thus, the expectation of $\alpha$ is
\begin{align}
\mathbb{E}_{x,\mathcal{S},\mathrm{a},\mathrm{b}}[\alpha] &= \mathbb{E}_{\mathrm{a},\mathrm{b}}[(\mu_a-\mu_b)^{\top}F^{\top}F(\mu_a-\mu_b)]\notag\\
&=\mathbb{E}_{\mathrm{a},\mathrm{b}}[\mu_a^{\top}F^{\top}F\mu_a+\mu_b^{\top}F^{\top}F\mu_b-\mu_a^{\top}F^{\top}F^{\top}\mu_b-\mu_b^{\top}F^{\top}F\mu_b]\notag\\
&=\mathrm{Tr}(\Sigma_F)+\mathrm{Tr}(\Sigma_F)+2\mu^{\top}F^{\top}F\mu - 2\mu^{\top}F^{\top}F\mu\notag\\
&=2\mathrm{Tr}(\Sigma_F).
\end{align}
\end{proof}

From now on, we show the expectation of $\alpha$, and then we show its conditional variance. The whole statement is as follows and to prove it, we use the transformation theorem of the covariance matrix.
\begin{lemma}[Conditional variance of $\alpha$]
\label{lem:lemma3}
Under the same condition and notation as Lemma2, the following inequality holds
\begin{align}
\mathrm{Var}[\alpha,|\mathrm{a},\mathrm{b}]\leq 8\left(1+\frac{1}{k}\right)\mathrm{Tr}\left(\Sigma_{F,c}\left(\left(1+\frac{1}{k}\right)\Sigma_{F,c}+2\Sigma_F\right)\right).
\end{align}
\end{lemma}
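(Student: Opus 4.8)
The plan is to condition on the two query classes $\mathrm a,\mathrm b$, reduce $\mathrm{Var}[\alpha\mid\mathrm a,\mathrm b]$ to the variance of a difference of two squared norms of jointly Gaussian vectors, and then apply standard fourth‑moment identities. First I would write $\alpha=\|y_1\|^2-\|y_2\|^2$ with $y_1:=F\phi(x)-\overline{F\phi(\mathcal S_{\mathrm b})}$ and $y_2:=F\phi(x)-\overline{F\phi(\mathcal S_{\mathrm a})}$. Conditioned on $\mathrm a,\mathrm b$, the three vectors $F\phi(x)$, $\overline{F\phi(\mathcal S_{\mathrm a})}$ and $\overline{F\phi(\mathcal S_{\mathrm b})}$ are mutually independent Gaussians (by the Gaussian model assumption together with Lemma~\ref{lem:lemma1}); using $\mathbb E[\phi(x)\mid\mathrm a,\mathrm b]=\mu_{\mathrm a}$, the assumption $\Sigma_{\mathrm a}=\Sigma_{\mathrm b}$, the notational identification of the projected covariances with $\Sigma_{F,c}$, the transformation rule $\mathrm{Var}[AX]=A\,\mathrm{Var}[X]\,A^\top$, and $|\mathcal S_c|=k$, I would record $\mathbb E[y_1\mid\mathrm a,\mathrm b]=F(\mu_{\mathrm a}-\mu_{\mathrm b})$, $\mathbb E[y_2\mid\mathrm a,\mathrm b]=0$, $\mathrm{Var}[y_1\mid\mathrm a,\mathrm b]=\mathrm{Var}[y_2\mid\mathrm a,\mathrm b]=\left(1+\tfrac1k\right)\Sigma_{F,c}$, and $\mathrm{Cov}[y_1,y_2\mid\mathrm a,\mathrm b]=\Sigma_{F,c}$ (the only term shared by $y_1$ and $y_2$ is $F\phi(x)$).

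Next I would expand $\mathrm{Var}[\alpha\mid\mathrm a,\mathrm b]=\mathrm{Var}[\|y_1\|^2]+\mathrm{Var}[\|y_2\|^2]-2\,\mathrm{Cov}[\|y_1\|^2,\|y_2\|^2]$ and substitute the Gaussian identities $\mathrm{Var}[\|y\|^2]=2\,\mathrm{Tr}(\Sigma^2)+4\,m^\top\Sigma m$ and, for jointly Gaussian $y_1,y_2$ with means $m_1,m_2$ and cross‑covariance $\Sigma_{12}$, $\mathrm{Cov}[\|y_1\|^2,\|y_2\|^2]=2\,\mathrm{Tr}(\Sigma_{12}^2)+4\,m_1^\top\Sigma_{12}m_2$. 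Because $m_2=0$ the mixed‑mean contribution vanishes and the trace terms collapse, leaving $\mathrm{Var}[\alpha\mid\mathrm a,\mathrm b]=\left(4\left(1+\tfrac1k\right)^2-4\right)\mathrm{Tr}(\Sigma_{F,c}^2)+4\left(1+\tfrac1k\right)(\mu_{\mathrm a}-\mu_{\mathrm b})^\top F^\top\Sigma_{F,c}F(\mu_{\mathrm a}-\mu_{\mathrm b})$.

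Finally I would bound the two pieces. The scalar $4\left(1+\tfrac1k\right)^2-4$ is positive and at most $8\left(1+\tfrac1k\right)^2$, which handles the first term. For the second, I would use that in a balanced two‑class task $\overline{F\phi(\mathcal S)}$ is the midpoint of the two class representations, so the between‑class covariance reduces to $\Sigma_F=\tfrac14 F(\mu_{\mathrm a}-\mu_{\mathrm b})(\mu_{\mathrm a}-\mu_{\mathrm b})^\top F^\top$; hence $(\mu_{\mathrm a}-\mu_{\mathrm b})^\top F^\top\Sigma_{F,c}F(\mu_{\mathrm a}-\mu_{\mathrm b})=\mathrm{Tr}\!\left(\Sigma_{F,c}F(\mu_{\mathrm a}-\mu_{\mathrm b})(\mu_{\mathrm a}-\mu_{\mathrm b})^\top F^\top\right)=4\,\mathrm{Tr}(\Sigma_{F,c}\Sigma_F)$. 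Assembling, $\mathrm{Var}[\alpha\mid\mathrm a,\mathrm b]\le 8\left(1+\tfrac1k\right)^2\mathrm{Tr}(\Sigma_{F,c}^2)+16\left(1+\tfrac1k\right)\mathrm{Tr}(\Sigma_{F,c}\Sigma_F)=8\left(1+\tfrac1k\right)\mathrm{Tr}\!\left(\Sigma_{F,c}\!\left(\left(1+\tfrac1k\right)\Sigma_{F,c}+2\Sigma_F\right)\right)$, which is the claimed inequality.

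The step I expect to be the main obstacle is the covariance term $-2\,\mathrm{Cov}[\|y_1\|^2,\|y_2\|^2]$: $y_1$ and $y_2$ are \emph{not} independent (they share $F\phi(x)$), so one cannot simply add the two variances, and one must invoke the joint‑Gaussian fourth‑moment identity and track the cross‑covariance $\Sigma_{F,c}$ correctly — otherwise the constants blow up. A secondary point of care is the bookkeeping that the mean entering the quadratic‑form variance is the population difference $F(\mu_{\mathrm a}-\mu_{\mathrm b})$, which only becomes the between‑class covariance $\Sigma_F$ after using the equal‑class‑size structure of the task; everything else is routine Gaussian moment algebra, parallel to the computation already carried out in the proof of Lemma~\ref{lem:lemma2}.
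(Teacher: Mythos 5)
Your argument is sound in substance and lands on the stated bound, but it takes a genuinely different route from the paper at the key step. The paper never evaluates the cross term: it bounds $-2\,\mathrm{Cov}\!\left(\|y_1\|^2,\|y_2\|^2\right)$ via Cauchy--Schwarz together with $p+q\ge 2\sqrt{pq}$, obtaining $\mathrm{Var}[\alpha\mid\mathrm a,\mathrm b]\le 2\,\mathrm{Var}[\|y_1\|^2]+2\,\mathrm{Var}[\|y_2\|^2]$, and then applies the quadratic-form variance formula of Rencher and Schaalje to each summand. You instead compute the covariance exactly with the joint-Gaussian identity $\mathrm{Cov}[\|y_1\|^2,\|y_2\|^2]=2\,\mathrm{Tr}(\Sigma_{12}^2)+4\,m_1^\top\Sigma_{12}m_2$ and the correct cross-covariance $\Sigma_{12}=\Sigma_{F,c}$ coming from the shared $F\phi(x)$; this gives the tighter intermediate expression $\bigl(4(1+\tfrac1k)^2-4\bigr)\mathrm{Tr}(\Sigma_{F,c}^2)+4(1+\tfrac1k)(\mu_{\mathrm a}-\mu_{\mathrm b})^\top F^\top\Sigma_{F,c}F(\mu_{\mathrm a}-\mu_{\mathrm b})$, which you then deliberately relax to match the paper's constants. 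So where the paper is crude you are exact, and you make explicit the dependence between $y_1$ and $y_2$ that the paper's bound simply absorbs; the price is only that your improvement is thrown away at the end.

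The one step to state more carefully is the conversion of the mean term into $\mathrm{Tr}(\Sigma_{F,c}\Sigma_F)$. You stay conditional on $\mathrm a,\mathrm b$ and identify $\Sigma_F=\tfrac14F(\mu_{\mathrm a}-\mu_{\mathrm b})(\mu_{\mathrm a}-\mu_{\mathrm b})^\top F^\top$ from the midpoint structure; but $\Sigma_F$ as defined in Eq.~\eqref{eq:sigmaf} is built from the empirical prototypes $\overline{F\phi(\mathcal S_c)}$, which fluctuate around $F\mu_c$ with covariance $\tfrac1k\Sigma_{F,c}$, so conditionally one has $\mathbb E[\Sigma_F\mid\mathrm a,\mathrm b]=\tfrac14F(\mu_{\mathrm a}-\mu_{\mathrm b})(\mu_{\mathrm a}-\mu_{\mathrm b})^\top F^\top+\tfrac{1}{2k}\Sigma_{F,c}$ rather than your exact identity (the discrepancy is positive semidefinite, so your inequality survives, but only after this remark). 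The paper resolves the same issue differently: although Lemma~\ref{lem:lemma3} is stated conditionally, its proof takes a further expectation over the random classes and uses $\mathbb E_{\mathrm a,\mathrm b}\bigl[F(\mu_{\mathrm a}-\mu_{\mathrm b})(\mu_{\mathrm a}-\mu_{\mathrm b})^\top F^\top\bigr]=2\Sigma_F$, i.e.\ it reads $\Sigma_F$ as the covariance of the projected class means over the class distribution, exactly as in Lemma~\ref{lem:lemma2}. Your $4\times 4$ and the paper's $8\times 2$ both give $16\,(1+\tfrac1k)\,\mathrm{Tr}(\Sigma_{F,c}\Sigma_F)$, so the final bounds coincide; just say explicitly which reading of $\Sigma_F$ you adopt, since under the paper's reading your last step still requires the $\mathbb E_{\mathrm a,\mathrm b}$ averaging.
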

\begin{proof}[Proof of Lemma \ref{lem:lemma3}]

We bound the conditional variance of $\alpha$ such that
\begin{align}
\mathrm{Var}[\alpha|\mathrm{a},\mathrm{b}] &= \mathrm{Var}[||F\phi(X)-\overline{F\phi(\mathcal{S}_{\mathrm{b}})}||^2-||F\phi(X)-\overline{F\phi(\mathcal{S}_{\mathrm{a}})}||^2]\notag\\
&=\mathrm{Var}[||F\phi(X)-\overline{F\phi(\mathcal{S}_{\mathrm{b}})}||^2] + \mathrm{Var}[||F\phi(X)-\overline{F\phi(\mathcal{S}_{\mathrm{a}})}||^2]\notag\\
&- 2\mathrm{Cov}(||F\phi(X)-\overline{F\phi(\mathcal{S}_{\mathrm{b}})}||^2,||F\phi(X)-\overline{F\phi(\mathcal{S}_{\mathrm{a}})}||^2)\notag\\
&\leq \mathrm{Var}[||F\phi(X)-\overline{F\phi(\mathcal{S}_{\mathrm{b}})}||^2] + \mathrm{Var}[||F\phi(X)-\overline{F\phi(\mathcal{S}_{\mathrm{a}})}||^2]\notag\\
&+2\sqrt{\mathrm{Var}[||F\phi(X)-\overline{F\phi(\mathcal{S}_{\mathrm{b}})}||^2]\mathrm{Var}[||F\phi(X)-\overline{F\phi(\mathcal{S}_{\mathrm{a}})}||^2]}\notag\\
&\leq 2\mathrm{Var}[||F\phi(X)-\overline{F\phi(\mathcal{S}_{\mathrm{b}})}||^2] + 2\mathrm{Var}[||F\phi(X)-\overline{F\phi(\mathcal{S}_{\mathrm{a}})}||^2].
\end{align}
When we compute these formulae, we use the relation $p+q\geq 2\sqrt{pq}$.
Then, we use the result on the normal distribution $\mathcal{N}(\mu,\Sigma)$ from Rencher and Schaalje \cite{Alvin},
\begin{thm}[Variance to trace \cite{Alvin}]
Considering random vector $y\sim \mathcal{N}(\mu,\Sigma)$ and symmetric matrix constants $Q$, we have:
\begin{align}
\mathrm{Var}[y^{\top}Qy] = 2\mathrm{Tr}((Q\Sigma)^2) + 4\mu^{\top}Q\Sigma Q\mu.
\end{align}
\end{thm}
Using this theorem, we take $F\phi(X)-\overline{F\phi(\mathcal{S}_c)}$ ($c\in\{\mathrm{a},\mathrm{b}\})$ as $y$ and $F$ as $Q$, and we use the previous result $\mathrm{Var}[F\phi(X)-\overline{F\phi(X)}]=F\left(1+\frac{1}{k}\right)\Sigma_cF^{\top}$. Thus, we obtain
\begin{align}
\mathrm{Var}[||F\phi(X)-\overline{F\phi(\mathcal{S}_{\mathrm{b}})}||^2] &= 2\left(1+\frac{1}{k}\right)^2\mathrm{Tr}(\Sigma_{F,c}^2)+4\left(1+\frac{1}{k}\right)(\mu_{\mathrm{a}}-\mu_{\mathrm{b}})^{\top}F^{\top}F(\mu_{\mathrm{a}}-\mu_{\mathrm{b}}),\\
\mathrm{Var}[||F\phi(X)-\overline{F\phi(\mathcal{S}_{\mathrm{a}})}||^2] &= 2\left(1+\frac{1}{k}\right)^2\mathrm{Tr}(\Sigma_{F,c}^2).
\end{align}
Finally, we obtain
\begin{align}
\mathbb{E}_{\mathrm{a},\mathrm{b}}\alpha|\mathrm{a},\mathrm{b}] &\leq  \mathbb{E}_{\mathrm{a},\mathrm{b}}\left[2\mathrm{Var}[||F\phi(X)-\overline{F\phi(\mathcal{S}_{\mathrm{b}})}||^2]+2\mathrm{Var}[||F\phi(X)-\overline{F\phi(\mathcal{S}_{\mathrm{b}})}||^2]\right]\notag\\
&=\mathbb{E}_{\mathrm{a},\mathrm{b}}\left[8\left(1+\frac{1}{k}\right)^2\mathrm{Tr}(\Sigma_{F,c}^2)+8\left(1+\frac{1}{k}\right)(\mu_{\mathrm{a}}-\mu_{\mathrm{b}})^{\top}F^{\top}\Sigma_{F,c}F(\mu_{\mathrm{a}}-\mu_{\mathrm{b}})\right]\notag\\
&= 8\left(1+\frac{1}{k}\right)\mathbb{E}_{\mathrm{a},\mathrm{b}}\left[\mathrm{Tr}\left\{\left(1+\frac{1}{k}\right)\Sigma_{F,c}^2+\Sigma_{F,c}F(\mu_{\mathrm{a}}-\mu_{\mathrm{b}})(\mu_{\mathrm{a}}-\mu_{\mathrm{b}})^{\top}F^{\top}\right\}\right]\notag\\
&=8\left(1+\frac{1}{k}\right)\mathrm{Tr}\left\{\Sigma_{F,c}\left(\left(1+\frac{1}{k}\right)\Sigma_{F,c}+2\Sigma_F\right)\right\}.
\end{align}
\end{proof}

\begin{proof}[Proof of Theorem \ref{maintheorem}]

\begin{align*}
    Var(\alpha) &=\mathbb{E}_{\mathrm{a},\mathrm{b},x,\mathcal{S}_c}[\alpha^2] - \mathbb{E}_{\mathrm{a},\mathrm{b},x,\mathcal{S}_c}[\alpha]^2\\
    &=\mathbb{E}_{\mathrm{a},\mathrm{b}}[\mathrm{Var}[\alpha|\mathrm{a},\mathrm{b}]+\mathbb{E}_{x,\mathcal{S}_c}[\alpha|\mathrm{a},\mathrm{b}]^2]-\mathbb{E}_{\mathrm{a},\mathrm{b},x,\mathcal{S}_c}[\alpha].
\end{align*}
Then we can write the bound of the expected risk as the following inequality:
\begin{align*}
    R_{\mathcal{T},c}\leq 1-\frac{\mathbb{E}[\alpha]}{\mathbb{E}_{\mathrm{a},\mathrm{b}}[\mathrm{Var}[\alpha|\mathrm{a},\mathrm{b}]+\mathbb{E}_{x,\mathcal{S}_c}[\alpha|\mathrm{a},\mathrm{b}]^2]}.
\end{align*}
By using Lemmas.\,\ref{lem:lemma2} and \ref{lem:lemma3}, we derive Theorem.\,\ref{maintheorem}.
\end{proof}
\subsection{Generalization for multi class}
\label{ap:multi}
We extend the inequality bounding the expected risk to a multi-class version. We assume that the label of the query data is $y\in \mathcal{Y}$. Let $\alpha_{y'}=||F\phi(x) - F\overline{f(\mathcal{S}_{y'})}||^2 - ||F\phi(x)-F\overline{f(\mathcal{S}_y)}||^2$. If $\forall y'\in \mathcal{Y},y'\neq y,\alpha_{y'}>0$, then the predicted label $\widehat{y}=y$. Thus the probability to predict the quey label correctly is equal to $\mathrm{Pr}(\cap_{y'\neq y}\alpha_{y'}>0)$. Therefore by using Frechet’s inequality, we obtain
\begin{align*}
    \mathrm{Pr}(\cap_{y'_\neq y}\alpha_{y'}>0) \geq \sum_{y'\neq y}\mathrm{Pr}(\alpha_{y'}>0) - (C-2).
\end{align*}
The expected risk is $R_{\mathcal{T},c}=1-\mathrm{Pr}(\cap_{y'\neq y})$; then, it can be bounded as
\begin{align*}
    R_{\mathcal{T},c}\leq& (C - 1) - \sum_{y'\neq y}\mathrm{Pr}(\alpha_{y'}>0)\\
     =& \sum_{y'\neq y}(1-\mathrm{Pr}(\alpha_{y'}>0))\\
     \leq& \sum_{i=1,i\neq y}^C\left(1-\frac{4\mathrm{Tr}\left(\Sigma_F\right)^2}{8\left(1+\frac{1}{k}\right)^2\mathrm{Tr}\left(\Sigma_{F,c}^2\right)+16\left(1+\frac{1}{k}\mathrm{Tr}(\Sigma_F \Sigma_{F,c})\right) + \mathbb{E}[\left((\mu_y-\mu_i)^{\top}F^{\top}F(\mu_y-\mu_i)\right)^2]}\right),
\end{align*}
where we denote the number of classes by $C$, use the expected risk bound for the binary classification case, and assume for any $c\in \mathcal{Y}$, $\Sigma_{F,c}$ is constant.

\subsection{Comparison}
We compare LFD-ProtoNet, TapNet, ProtoNet, and a network called FDA-ProtoNet, where we replace LFDA in LFD-ProtoNet with simple FDA, shown in Table \ref{tab:comp}. Their essential difference is the way they extract features. 
ProtoNet does not extract features from the output of the network, i.e., the number of features is equal to the number of the output vector dimensions. 
TapNet has a feature extractor that aims to reduce the misalignment of the vector, and the number of the dimensions removed by SVD is equal to the number of classes; then, it can reduce at most the number of classes dimensions, and these dimensions are much fewer than the total dimensions.
LFD-ProtoNet has a feature extractor that aims to extract the subspace to minimize the trace of $Sb^{-1}Sw$ and the number of directions or features extracted is equal to the number of all samples in the support set. If we use FDA instead of LFDA, the features are too few to learn the network. FDA-ProtoNet has few features extracted from the support set, but its fatal disadvantage is that it can not obtain sufficient amounts of information to classify the query vectors.
\begin{table}[!t]
\caption{Comparison of models}
\centering
\renewcommand{\arraystretch}{1.2}
\begin{tabular}{l|c|c|c}
\toprule
     Model &  Feature extraction & Representation & Features amount \\
     \hline
     LFD-ProtoNet & LFDA & Mean & Few and sufficient\\
     TapNet & SVD & Reference vector & Many and sufficient\\
     ProtoNet & None & Meant & Many and sufficient \\
     FDA-ProtoNet & FDA & Mean & Few and insufficient 
\end{tabular}
\label{tab:comp}

\end{table}

\subsection{Learning framework based on ProtoNet}
In this paper, we consider the algorithms based on ProtoNet shown in Algorithm\,\ref{fig: base}.
We are given a task set $\{T_u\}_{u=1}^N$ for one update process and each $T_u$ is composed of the support set $D_{\mathrm{s}}$ and query set $D_{\mathrm{q}}$. At each loop step of the task set, using a feature extractor $\mathrm{FE}$, we obtain one matrix $F$ that projects embedded vectors to another Euclidean space. Then, we compute representation vectors $\{\overline{f_{\theta}(x_{c,\cdot})}\}_{c=1}^C$ for each class, and by using these vectors and $F$, the empirical loss is computed with the softmax loss function. Finally,  the parameter $\theta$ is updated. This process is the whole framework based on ProtoNet. With this framework, we can distinguish ProtoNet, TapNet, and LFD-ProtoNet.
For instance, since simple ProtoNet extracts no features, $F=I_m$ and it computes the representation vectors $\{\overline{f_{\theta}(x_{c,\cdot})}\}_{c=1}^C$ by using the average in each class.
TapNet extracts features to reduce the misalignment of the vectors using SVD and the representation vectors are learnable parameters $\phi$.
LFD-ProtoNet extracts features with LFDA to find a subspace for embedded vectors to be projected to and the representation vectors are similar to ProtoNet, i.e, the mean vectors of support vectors projected by $F_{\mathrm{LFDA}}$.

\begin{algorithm}[!t]
    \renewcommand{\algorithmicrequire}{\textbf{Input:}}
    \renewcommand{\algorithmicensure}{\textbf{Output:}}
    \caption{Few-shot learning (k-shot) algorithm framework based on ProtoNet}
    \label{fig: base}
    \begin{algorithmic}[1]
    \REQUIRE training task $\{T\}_{u=1}^N \sim \mathcal{T}$ where $T_u=\{D_s,D_q\}$, $D_s=\{(x_{c,i},y_{c,i})\}_{1\leq c \leq C,1\leq i \leq k}$ and $D_q=\{(x'_{c,i},y'_{c,i})\}_{1\leq c \leq C,1\leq i \leq M}$.
    \STATE $L_{tr}\leftarrow 0$
    \FOR{$u$ in $u=1,2,\ldots,N$}
    \STATE $(D_s,D_q)\leftarrow T_u$
    \STATE $F=\mathrm{FE}(D_s,...)$
    \STATE Compute $\{\overline{f_{\theta}(x_{c,\cdot})}\}_{c=1}^C$
    \STATE $L_{T_u}\leftarrow 0$
    \FOR{$c$ in $c=1,2,\ldots,C$}
    \FOR{$i$ in $i=1,2,\ldots,M$}
    \STATE $L_{T_u} \leftarrow L_{T_u} + g(Ff_{\theta}(x'_{c,i}),\overline{f_{\theta}(x_{c,\cdot})},\{\overline{f_{\theta}(x_{s,\cdot})}\}_{s\neq c})$
    \ENDFOR
    \ENDFOR
    \STATE $L_{tr}\leftarrow L_{tr} + \frac{1}{CM}L_{T_u}$
    \ENDFOR
    \STATE $L_{tr}\leftarrow \frac{1}{N}L_{tr}$
    \STATE update $\theta$ with $L_{tr}$
    \end{algorithmic}
\end{algorithm}

\subsection{Local Fisher Discriminant Analysis}\label{Sec:LFDA}
\begin{figure}[t!]
    \begin{minipage}{0.5\hsize}
    \centering
    \includegraphics[width=7cm]{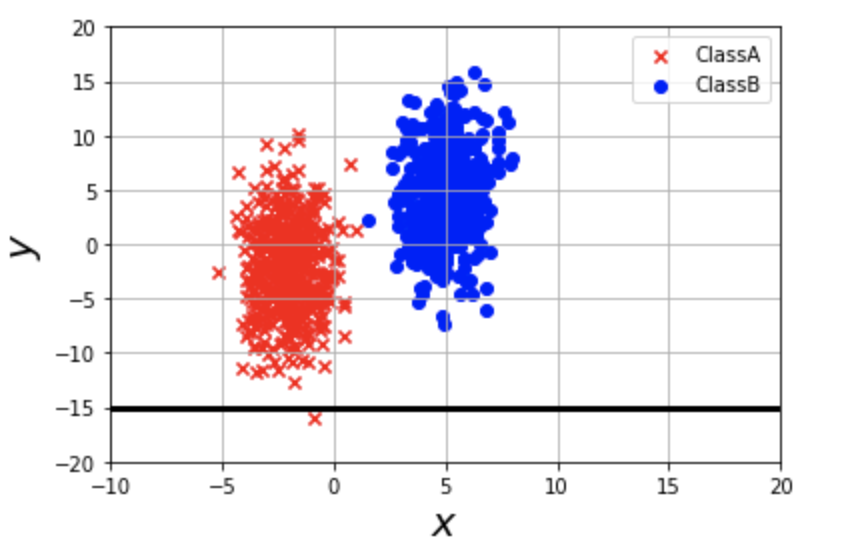}
    \end{minipage}
    \begin{minipage}{0.5\hsize}
    \centering
    \includegraphics[width=7cm]{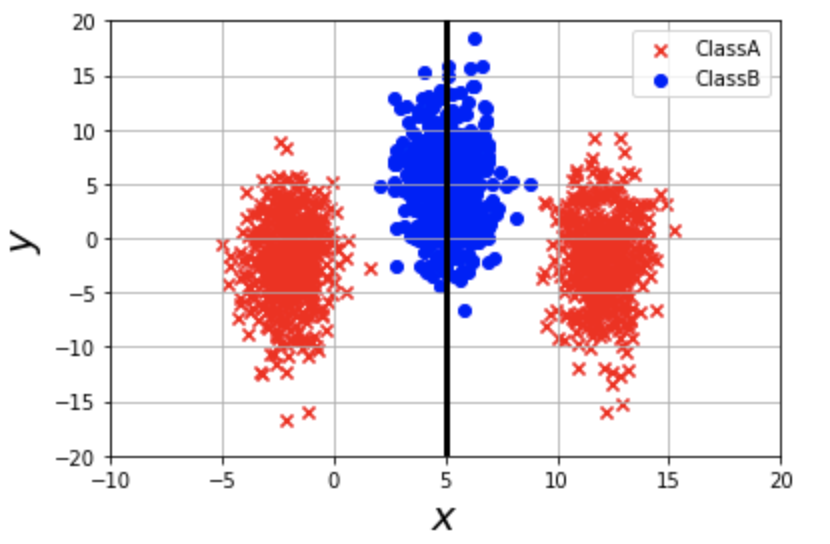}
    \end{minipage}
    \caption{Example of FDA: Left is a separable case. The variance of the axis $y$ is large; thus, FDA returns the projection subspace (black line) to make it small. Right is not a separable case. Class A (red) is separated, and FDA returns the projection subspace (black line) to make the within-covariance of class A small. }
    \label{fig:fda_ex_1}
\end{figure}
\begin{figure}[!t]
\centering
\includegraphics[width=6cm]{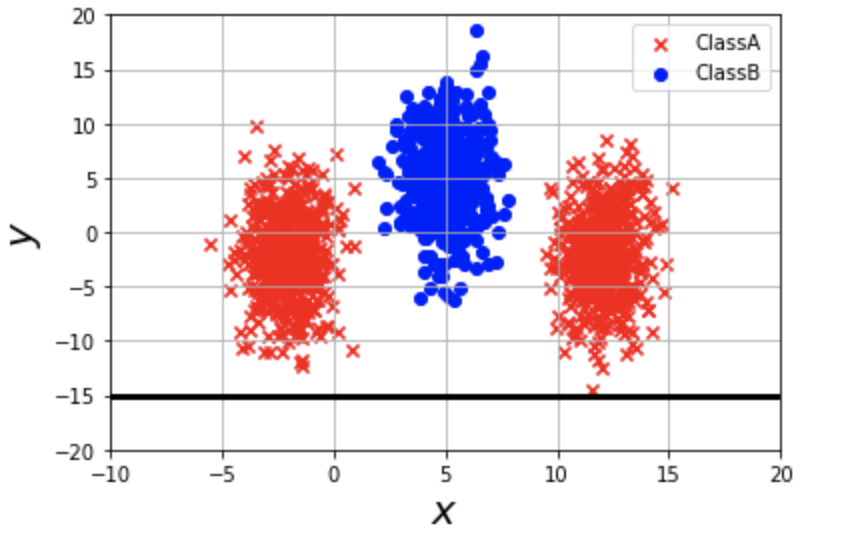}
\caption{Example of LFDA: The case FDA can not separate in Fig.\,\ref{fig:fda_ex_1}, can be separated by LFDA. The left side of Class A (red) and its right side are regarded as not similar so LFDA returns the projection subspace (black line) that makes the local within-covariance in each cluster small and the local between-covariance large.}
\label{fig:lfda_ex}
\end{figure}
First, we rewrite the previous notation of $S_{\mathrm{wit}}$ and $S_{\mathrm{bet}}$ as follows.
\begin{align}
S_{\mathrm{wit}} &= \frac{1}{2} {\displaystyle \sum_{c1,c2,i,j}} P_{c1,c2}^w (x_{c1,i}-x_{c2,j})(x_{c1,i}-x_{c2,j})^{\top},\quad
P_{c1,c2}^w =
\begin{cases}
\frac{1}{k} & (c1=c2)\\
0 & (c1 \neq c2)
\end{cases}\notag \\
S_{\mathrm{bet}} &= \frac{1}{2} {\displaystyle \sum_{c1,c2,i,j} P_{c1,c2}^b} (x_{c1,i}-x_{c2,j})(x_{c1,i}-x_{c2,j})^{\top},\quad
P_{c1,c2}^b = \begin{cases}
\frac{1}{k} - \frac{1}{kC} & (c1=c2)\\
\frac{1}{kC} & (c1\neq c2).
\end{cases}
\end{align}
Then, we add the similarity term $A_{c1,c2,i,j}\in[0,1]$ that means the similarity between $x_{c1,i}$ and $x_{c2,j}$ to $P_{i,j}^w$ and $P_{i,j}^b$, and the above equations are rewritten as
\begin{align}
S_{\mathrm{wit}}^{A} &= \frac{1}{2} {\displaystyle \sum_{c1,c2,i,j}} P_{c1,c2}^w (x_{c1,i}-x_{c2,j})(x_{c1,i}-x_{c2,j})^{\top},\quad
P_{i,j}^w =
\begin{cases}
\frac{A_{c1,c2,i,j}}{k} & (c1=c2)\\
0 & (c1 \neq c2)
\end{cases}\notag\\
S_{\mathrm{bet}}^{A} &= \frac{1}{2} {\displaystyle \sum_{c1,c2,i,j} P_{c1,c2}^b} (x_{c1,i}-x_{c2,j})(x_{c1,i}-x_{c2,j})^{\top},\quad
P_{i,j}^b = \begin{cases}
A_{c1,c2,i,j}\left(\frac{1}{kC} - \frac{1}{k}\right) & (c1=c2)\\
\frac{1}{kC} & (c1\neq c2).
\end{cases}
\end{align}
The squared exponential kernel $\exp(-(||x_{c_1,i}-x_{c_2,j}||)^2)$ is used for $A_{c_1,c_2,i,j}$.

To use FDA, we can separate samples shown in Fig.\,\ref{fig:fda_ex_1}(left) but not samples shown in Fig.\,\ref{fig:fda_ex_1}(right).
This is because in the latter case one class is sandwiched by the other class and FDA tries to make the within-class small even if samples in the same class are separated.

With LFDA, such complicated cases can be separated because the distant clusters in the same class are treated like different classes by the affinity term. The projection direction is illustrated in Fig.\,\ref{fig:lfda_ex}.

\subsection{Additional experiment for covariance}
\begin{figure}[t!]
    \begin{minipage}{0.5\hsize}
    \centering
    \includegraphics[width=7cm]{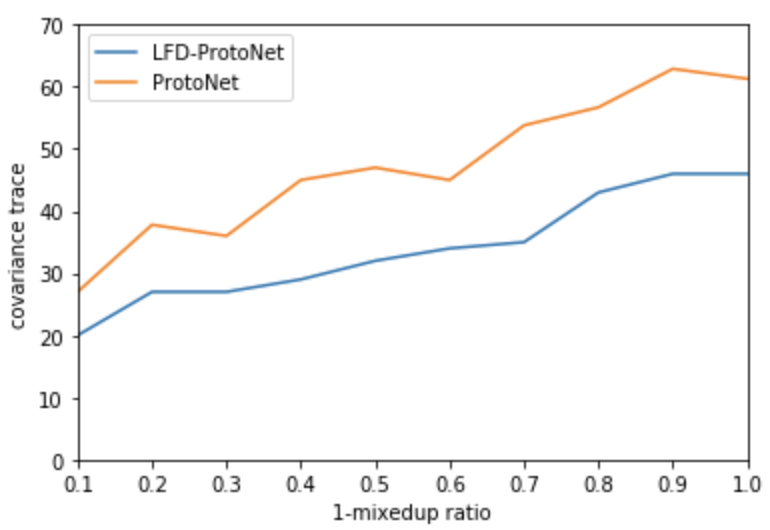}
    \end{minipage}
    \begin{minipage}{0.5\hsize}
    \centering
    \includegraphics[width=7cm]{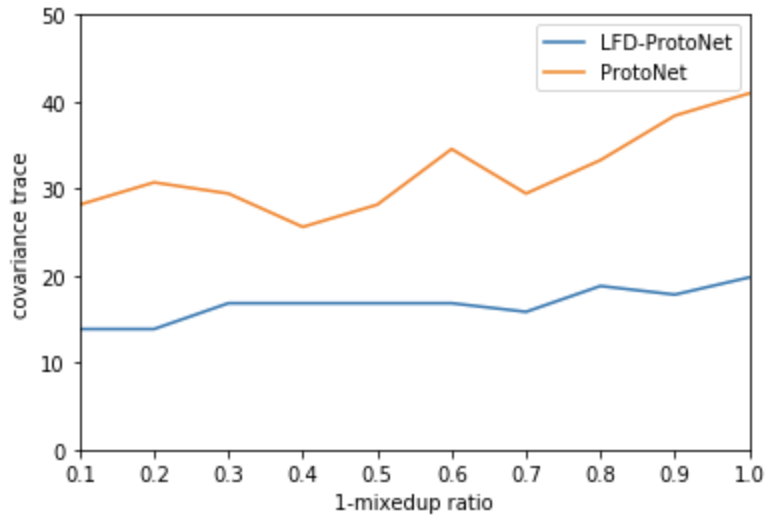}
    \end{minipage}
    \caption{Covariance ratio change: The horizontal axis is $1-\lambda$ and the vertical axis is the trace of covariance ratio. The left  and right graphs are tieredImageNet and miniImageNet cases, respectively.}
    \label{fig: cov_ratio}
\end{figure}
As we show, $\Sigma_F^{-1}\Sigma_{F,c}$ is desired to be small in LFD-ProtoNet. As in Cao et al. \citep{Tianshi}, $\Sigma^{-1}\Sigma_c$ is also desired to be small. Thus, we conducted an experiment to investigate how the covariance matrix ratio $\Sigma_F^{-1}\Sigma_{F,c}$ and $\Sigma^{-1}\Sigma_c$ changed when we changed the within-class covariance and between-class covariance by adding an operation to input images. We mixed input images $x$ and one image $x_{\mathrm{mix}}$ with a ratio $\lambda \in [0,1)$:
\begin{align*}
    x_{\mathrm{mixed},\lambda} = (1-\lambda)x + \lambda x_{\mathrm{mix}}.
\end{align*}
When $\lambda=0$, this means that we use normal images without any mix operations and when $\lambda \sim 1$, this means that whole data locate nearby $x_{\mathrm{mix}}$. We varied $\lambda$ from $0$ to $0.9$ and investigated how the covariance ratio changes with $\lambda$. The result is shown in Fig.\,\ref{fig: cov_ratio}. As we showed in Sec.\,\ref{sec:theory}, small covariance ratio is desirable and for any $\lambda$, the covariance ratio in LFD-ProtoNet is smaller than that of ProtoNet. We can conclude that LFD-ProtoNet performs better than ProtoNet.  The covariance ratio of ProtoNet also changes more drastically than that of LFD-ProtoNet, which means that LFD-ProtoNet is more stable than ProtoNet for the change of covariance. This is because, LFD-ProtoNet projected embedded vectors to the subspace that minimizes within-class covariance and maximizes between-class covariance thus projected vectors are less affected by the mixup operation.

\subsection{Notations}
\begin{longtable}{ll}
\hline
 & \multicolumn{1}{c}{Notations} \\
\hline\hline
\endfirsthead
\hline
 & \multicolumn{1}{c}{Notations} \\
\hline\hline
\endhead
$\mathcal{D}$ & A data distribution over $\mathcal{X}\times \mathcal{Y}$\\
$(x_{c,i},c)$ & A data sample of class $c$\\
$D_s$ & Support set drawn from $\mathcal{D}$\\
$D_q$ & Query set drawn from $\mathcal{D}$\\
$T$ & A task consists of support set $D_s$ and query set $D_q$\\
$\mathcal{T}$ & Task distribution \\
$\theta$ & Model parameters\\
$f_{\theta}$ & A network parameterized by $\theta$\\
$\overline{f_{\theta}(x_{c,\cdot})}$ & Representation vector of class $c$\\
$g$ & Loss function\\
$L_{\mathcal{T}}(f)$ & Generalization loss for the task distribution $\mathcal{T}$\\
$\widehat{L}_{\mathcal{T}}(f)$ & Empirical loss for the task distribution $\mathcal{T}$\\
$f^*$ & A optimal network minimizing the generalization loss $L_{\mathcal{T}}(f)$\\
$\widehat{f^*}$ & A network minimizing the empirical loss $\widehat{L}_{\mathcal{T}}(f)$\\
$F$ & A feature projection matrix\\
$\Sigma_c$ & A within-class covariance matrix of class $c$\\
$\Sigma$ & A between-class covariance matrix\\
$\Sigma_{F,c}$ & A within-class covariance matrix of class $c$ where vectors are projected by $F$\\
$\Sigma_F$ & A between-class covariance matrix where vectors are projected by $F$\\
$C$ & Number of total classes\\
$M$ & Number of samples per class\\
$N$ & Number of tasks in the training step\\
$k$ & Number of shots
\end{longtable}
\end{document}